\newcommand{\Exp}{\mathop{\mathbb E}\displaylimits}
\newtheorem{theorem}{Theorem}
\newtheorem{lemma}{Lemma}
\newtheorem{condition}{Condition}
\newtheorem{regular_conditions}{Regular Conditions}
\newcommand{\lc}[1]{\left \{ #1 \right.}
\newcommand{\rc}[1]{\left. #1 \right \}}
\newcommand{\lrc}[1]{\left \{ #1 \right \}}
\renewcommand{\rVert}[1]{\left. #1 \right \Vert}
\newcommand{\lrVert}[1]{\left \Vert #1 \right \Vert}
\newcommand{\Ed}[2]{\operatorname*{\mathbb{E}}_{#1} \lrc{#2}}
\newcommand{\Edl}[2]{\operatorname*{\mathbb{E}}_{#1} \lc{#2}}
\begin{document}
\title{Mixed Policy Gradient: off-policy reinforcement learning driven jointly by data and model}
\author{Yang Guan\textsuperscript{1}, Jingliang Duan\textsuperscript{1}, Shengbo Eben Li\textsuperscript{1}, Jie Li\textsuperscript{1}, Jianyu Chen\textsuperscript{2}, Bo Cheng\textsuperscript{1}
\thanks{
This work has been submitted to the IEEE for possible publication. Copyright may be transferred without notice, after which this version may no longer be accessible.

This work was supported by International Science \& Technology Cooperation Program of China under 2019YFE0100200, Tsinghua University-Toyota Joint Research Center for AI Technology of Automated Vehicle, and NSF China with 51575293, and U20A20334.}
\thanks{\textsuperscript{1}School of Vehicle and Mobility, Tsinghua University, Beijing, 100084, China. \textsuperscript{2}Institute for Interdisciplinary Information Sciences, Tsinghua University, Beijing, 100084, China. All correspondence should be sent to S. Eben Li. $<$lisb04@gmail.com$>$.}% <-this % stops a space
}

% \markboth{Journal of \LaTeX\ Class Files,~Vol.~14, No.~8, August~2015}%
% {Shell \MakeLowercase{\textit{et al.}}: Bare Demo of IEEEtran.cls for IEEE Journals}

\maketitle

\begin{abstract}
Reinforcement learning (RL) shows great potential in sequential decision-making. At present, mainstream RL algorithms are data-driven, which usually yield better asymptotic performance but much slower convergence compared with model-driven methods. This paper proposes mixed policy gradient (MPG) algorithm, which fuses the empirical data and the transition model in policy gradient (PG) to accelerate convergence without performance degradation. Formally, MPG is constructed as a weighted average of the data-driven and model-driven PGs, where the former is the derivative of the learned Q-value function, and the latter is that of the model-predictive return. To guide the weight design, we analyze and compare the upper bound of each PG error. Relying on that, a rule-based method is employed to heuristically adjust the weights. In particular, to get a better PG, the weight of the data-driven PG is designed to grow along the learning process while the other to decrease. Simulation results show that the MPG method achieves the best asymptotic performance and convergence speed compared with other baseline algorithms.
\end{abstract}

% Note that keywords are not normally used for peerreview papers.
\begin{IEEEkeywords}
Reinforcement Learning, Policy Gradient, Model-free RL, Model-based RL.
\end{IEEEkeywords}

\section{Introduction}
Reinforcement learning (RL) algorithms have been applied in a wide variety of challenging domains and achieved good performance, ranging from games to robotic control \cite{ mnih2015human,guan2020centralized, vinyals2019grandmaster,duan2023optimization,duan2023optimization2,duan2021encoding}. Policy gradient (PG) is one of the main categories of RL and has drawn great attention due to its suitability for continuous action space and deterministic policy. PG methods can be categorized into data-driven methods and model-driven methods. The former learns a policy purely from empirical data generated by interaction between the agent and a environment or high-fidelity simulators, whereas the latter learns through backpropagation through time (BPTT) with the help of transition models. In addition, PG methods can be further categorized into on-policy and off-policy. On-policy methods solely utilize data generated by the current policy to construct PG, whereas off-policy methods can use data from any policy. Therefore, samples can be reused, leading to improved data efficiency. Below, we will introduce existing methods using the classification of data-driven and model-driven, and unless otherwise specified, we will primarily focus on off-policy PG methods. 
    
Data-driven PG methods only require data to learn a control policy. The PG theorem was first proposed by Sutton \emph{et al.} (2000), who showed that the action-value (i.e., Q-value) function could be used to calculate the PG of stochastic policies \cite{sutton2000policy}. The deterministic version of the PG (DPG) theorem was discovered by Silver \emph{et al.} (2014). DPG also needs a Q-value to compute its gradient with respect to the action. In data-driven PG, imprecise value function may lead to poor PG, thereby negatively affecting policy performance.
Therefore, several tricks have been proposed to improve the estimation accuracy of value functions. Inspired by Deep Q Networks \cite{mnih2015human}, Lillicrap \emph{et al.} (2015) used a stochastic policy to explore in the environment to learn a more accurate Q-value function \cite{lillicrap2015continuous}. Besides, they introduced a target Q-value function to stabilize the learning process. Schulman \emph{et al.} (2016) estimated the value function in the PG using an exponentially-weighted estimator, substantially reducing the bias introduced by the learned value function \cite{schulman2015high}. Combining with techniques that guarantee monotonic improvement (such as TRPO and PPO\cite{schulman2015trust, schulman2017proximal}), they yielded strong empirical results on highly challenging 3D locomotion tasks. Fujimoto \emph{et al.} (2018) took the minimum value between a pair of Q-value functions to avoid overestimation \cite{fujimoto2018addressing}. 
%In addition, they proposed to delay policy updates until the value error is as small as possible. 
Duan \emph{et al.} proposed to learn a continuous return distribution to address value estimation errors, achieving state-of-the-art performance on the suite of MuJoCo continuous control tasks \cite{duan2021distributional}. 

% With accurate value functions, data-driven methods can handle environment uncertainties. Tutsoy and Barkana proposed a model-free adaptive control for the under-actuated robot manipulator, realizing real-time control in uncertain environment\cite{tutsoy2021model}. Yang \emph{et al.} proposed an actor–critic method that is robust to continuous-time nonlinear systems with unmodeled dynamics\cite{yang2021robust}.
% However, learning an accurate value function in each policy update is still not practical, which is influenced by many factors such as the finite horizon and the discount factor\cite{tutsoy2016chaotic,yang2021model}.

Model-driven PG methods require a differentiable model which can be either given as a prior or learned from empirical data\cite{yang2021hamiltonian}. With the model, the analytical PG can be directly calculated by backpropagation of rewards along a trajectory.
In this way, the influence of the estimated value function on PG accuracy can be largely reduced. Therefore, leveraging model can speed up the convergence and reduce the sample complexity. A pioneer work is PILCO proposed by Deisenroth and Rasmussen (2011), in which they learned a probabilistic model, and computed an analytical PG over a finite-horizon reward sum to learn a deterministic policy. Results showed that PILCO can facilitate learning from scratch in only a few trials \cite{deisenroth2011pilco}. Heess \emph{et al.} (2015) extended BPTT to infinite-horizon and stochastic policy using a learned value function and re-parameterization trick \cite{heess2015learning}. However, there are three main disadvantages of model-driven PG methods. The first is the numerical instability of long chain nonlinear computations, which leads to random and meaningless PG. To address this problem, Parmas \emph{et al.} (2019) introduced an additional likelihood ratio PG estimator and automatically gave greater weight to estimators with lower variance \cite{parmas2019pipps}. Second, the BPTT PG is sensitive to model errors, especially for a long prediction horizon \cite{mu2020mixed}. A common remedy for this is to truncate the model prediction by a value function \cite{feinberg2018model,buckman2018sample,janner2019trust}. Finally, the gradient computing time of BPTT grows linearly with the length of predictive horizon, which eliminates the benefits of fast convergence.

Recent years, some studies tried to use both the data and model for better overall performance. A straightforward and mostly used idea is to fuse them on the model perspective, i.e., first improving an existing model by data and then learning from it. Shi \emph{et al.} (2019) learned the uncertain environment disturbances based on the drone analytic model and realized stable landing by a feedback controller\cite{shi2019neural}. Similarly, Mu \emph{et al.} (2020) fitted the noise term of the model, and then learned a policy by the BPTT PG\cite{mu2020mixed}. However, this kind of methods suffer overfitting issues. The model accuracy is highly associated to the data distribution, out of which the performance can be arbitrary bad. To cope with this, some methods learn an ensemble model and select to use by certain rules \cite{janner2019trust,kurutach2018model}; some build the probabilistic model to consider model uncertainty \cite{deisenroth2011pilco}; and some directly restrict high order derivatives of model to prevent its overfitting\cite{shi2019neural}.
% model and select to use by certain rules \cite{janner2019trust,kurutach2018model,chua2018deep}; some build the probabilistic model to consider model uncertainty \cite{deisenroth2011pilco,mu2020mixed}; and some directly restrict high order derivatives of model to prevent its overfitting\cite{shi2019neural}.}

Since all these disadvantages of existing methods, we propose to employ both the empirical data and transition model in the PG level to get advantages of both fast convergence and good asymptotic performance. Compared with some existing studies, the main contributions emphasize in three parts.

1) We propose a mixed policy gradient (MPG) reinforcement learning method which combines the data-driven PG and the model-driven PG by weighted average to get benefit from fast convergence speed of model and better asymptotic performance of data.

2) We derive the theoretical upper bound of the data-driven and the model-driven PG bias, i.e., the distance with the ground-truth PG. Relying on their relative magnitude, we design a rule-based method to adaptively adjust the data and model PG weights during policy update for best performance.

3) We build an asynchronous learning architecture to address the poor computing efficiency suffered by BPTT, in which multiple learners are employed to compute the MPG in parallel to enhance update throughput.
\section{Notations and Preliminaries}
\subsection{Notations}
\begin{enumerate}
    \item $s,a,r,s'$: state, action, reward, next state
    \item $\mathcal{S},\mathcal{A}$: state space, action space
    \item $p(\cdot|s,a),p(s,a)$: stochastic/deterministic dynamics
    \item $f(\cdot|s,a),f(s,a)$: stochastic/deterministic model
    \item $r(s,a)$: reward function
    \item $\pi(s,a),\pi_{\theta}(s,a)$: policy, policy network
    \item $d^0(s)$: initial state distribution
    \item $\gamma$: discount factor
    \item $\tau(s_t,a_t)$: trajectory that starts from $(s_t, a_t)$ and follows $\pi$ and $p$, i.e., $(s_t,a_t,r_t,s_{t+1},a_{t+1},r_{t+1},\dots)$
    \item $\hat{\tau}(s_t,a_t)$: trajectory that starts from $(s_t, a_t)$ and follows $\pi$ and $f$, i.e., $(s_t,a_t,\hat{r}_t,\hat{s}_{t+1},\hat{a}_{t+1},\hat{r}_{t+1},\dots)$
    \item $\tau(s_t)$: trajectory that starts from $s_t$ and follows $\pi$ and $p$, i.e., $\tau(s_t,\pi(s_t))$
    \item $\hat{\tau}(s_t)$: trajectory that starts from $s_t$ and follows $\pi$ and $f$, i.e., $\hat{\tau}(s_t,\pi(s_t))$
    \item $q^{\pi}(s,a),Q_{\omega}(s,a)$: value, value network
    \item $\rho^{\pi}(s),d^{\pi}(s)$: discount visiting frequency, stationary distribution
\end{enumerate}

\subsection{Preliminaries}
We consider a Markov decision process (MDP) defined by the tuple $(\mathcal{S}, \mathcal{A}, p, r, \gamma)$. Define the Q-value function as:
% We denote $o$ as the observation with a random noise $\epsilon$, i.e., $o'=p(o, a)+\epsilon$. 
% We also use $\rho^{\pi}(s)$ to denote the state distribution induced by policy $\pi$ in dynamics $p$, which is defined in \cite{guan2019direct} as the discounted visiting frequency.
% We assume $\rho^{\pi}$ has no gradient with respect to the policy. Unless otherwise stated, $t$ is used as the start time, $s_t=\hat{s}_t=o_t\sim \rho^{\pi}; a_t = \hat{a}_t; s_{t+1}\sim p(\cdot|s_t, a_t), s_{l+1}\sim p(\cdot|s_{l}, \pi(s_{l})),l> t; \hat{s}_{t+1}\sim f(\cdot|\hat{s}_t, \hat{a}_t), \hat{s}_{l+1}\sim f(\cdot|\hat{s}_{l}, \pi(\hat{s}_{l})),l> t;
% r_t := r(s_t, a_t), r_l:=r(s_l, \pi(s_l)), l>t;
% \hat{r}_t := r(\hat{s}_t, \hat{a}_t), \hat{r}_l=r(\hat{s}_l, \pi(\hat{s}_l)), l> t$. Let $q^{\pi}$ be the Q-value function defined as:
\begin{equation}
\nonumber
q^{\pi}(s,a) =\Ed{\tau(s,a)}{\sum_{l=t}^{\infty} \gamma^{l-t} r_l}.
\end{equation}
% where $r_l:= r\left(s_{l}, a_{l}\right), l\ge t$.
We approximate Q-value by a network $Q_{\omega}(s, a),$ and approximate policy by a network $\pi_{\theta}(s)$.
% Unless otherwise stated, we denote $\hat{Q}_{w,t}:=Q_w(\hat{s}_t, \hat{a}_t), \hat{Q}_{w, l}:=Q_w(\hat{s}_l, \pi(\hat{s}_l)), l>t$.
The goal of RL is to optimize the policy network for maximizing the expected sum of discounted rewards:
\begin{equation}\label{eq.objective_function}
J(\theta) = \Ed{s\sim d^0}{q^{\pi_{\theta}}(s, \pi_{\theta}(s))}.
\end{equation}
From the theory of DPG \cite{silver2014deterministic}, its gradient is
\begin{equation}\label{eq.pg}
\nabla_{\theta} J(\theta) = \Ed{s_t\sim\rho^{\pi_{\theta}}}{\nabla_{\theta}\pi_{\theta}(s_t)\nabla_{a_t} q^{\pi_{\theta}}(s_t, a_t)\bigg|_{a_t = \pi_{\theta}(s_t)}}.
\end{equation}
where $\rho^{\pi_{\theta}}$ is the discount visiting frequency\cite{guan2021direct}.
For simplicity, we will omit the arguments of the derivative function, and use $s_t\sim\rho^{\pi_{\theta}}$ unless stated.
% denote $\mathbb{E}_{s}\left\{\nabla_a g\nabla_{\theta}\pi_{\theta}\right\}:=\mathbb{E}_{s}\left\{\nabla_a g(s, a)|_{a = \pi_{\theta}(s)}\nabla_{\theta}\pi_{\theta}(s)\right\}$ through this paper, where $g$ is any function of $s$ and $a$. DPG \eqref{eq.pg} can be simplified as
% \begin{equation}\label{eq.pg_simplify}
% \nabla_{\theta} J(\theta) = \mathbb{E}_{s_t}\left\{\nabla_{a_t} q^{\pi_{\theta}}\nabla_{\theta}\pi_{\theta}\right\}.
% \end{equation}

The data-driven PG is calculated by substituting $q^{\pi_{\theta}}$ in equation \eqref{eq.pg} with its estimate $Q_w$:
\begin{equation}\label{eq.pg_data}
\nabla_{\theta} J^{\text{Data}}(\theta) = \Ed{s_t}{\nabla_{\theta}\pi_{\theta}\nabla_{a_t} Q_{\omega}}.
\end{equation}

By contrast, the model-driven PG is constructed by first approximating the objective \eqref{eq.objective_function} using model, and then taking the gradient of it:
\begin{equation}\label{eq.pg_model_infty}
\nabla_{\theta} J^{\text{Model}}(\theta) = \Ed{s_t}{\nabla_{\theta} \Ed{\hat{\tau}(s_t)}{\sum_{l=t}^{\infty} \gamma^{l-t} \hat{r}_l}}.
\end{equation}
In practice, the infinite sum is usually truncated by a learned value function.
\section{Mixed Policy Gradient}
Different from current methods, this section introduces a novel gradient construction method: weighted average of data-driven and model-driven PGs. The core idea is illustrated by the following formula:
\begin{equation}
\label{eq.MPG_form}
    \nabla_{\theta}J^{\text{Mixed}}(\theta) = w_{\text{data}} \nabla_{\theta}J^{\text{Data}}(\theta) +w_{\text{model}}\nabla_{\theta}J^{\text{Model}}(\theta)
\end{equation}
where $w_{\text{data}}$ and $w_{\text{model}}$ represent the weights assigned to the data-driven and model-driven components, respectively. While the concept is straightforward, the challenge lies in the weight design. It requires a more in-depth investigation into the gradients of data-driven and model-driven approaches.

Therefore, this section first examines the PG biases of data-driven and model-driven approaches and subsequently derives the mathematical form of MPG. A rule-based weight design method is then presented in the following section.

\subsection{Data/Model policy gradient unification}
To study the data-driven and model-driven PGs, we first unify the data and model PGs by replacing the value function in \eqref{eq.pg} with $n$-step Bellman recursion:
\begin{equation}
\label{eq.unified_PG}
\begin{aligned}
&\nabla_{\theta} J_n(\theta) =\\
&\Ed{s_t}{\nabla_{\theta}\pi_{\theta}\nabla_{a_t}\Ed{\hat{\tau}(s_t)}{\sum_{l=t}^{n-1+t} \gamma^{l-t} \hat{r}_l + \gamma^{n} \hat{Q}_{\omega, t+n}}}
\end{aligned}
\end{equation}
% where $\hat{s}_{t+1} \sim f(\cdot|\hat{s}_{t}, \hat{a}_{t})$, $\hat{s}_{t+i} \sim f(\cdot|\hat{s}_{t+i-1}, \pi_{\theta}(\hat{s}_{t+i-1})), i>1$;
% $\hat{r}_{t} := r(\hat{s}_{t}, \hat{a}_{t})$, $\hat{r}_{t+i} := r(\hat{s}_{t+i}, \pi_{\theta}(\hat{s}_{t+i})), i > 0$; $\hat{Q}_{w, t}:=Q_w(\hat{s}_{t},\hat{a}_{t})$, $\hat{Q}_{w, t+i}:=Q_w(\hat{s}_{t+i},\pi_{\theta}(\hat{s}_{t+i})), i > 0$.
where $n$ is the model rollout steps and  $\hat{Q}_{\omega,t+n}$ is short for $Q_{\omega}(\hat{s}_{t+n},\hat{a}_{t+n})$. We call equation \eqref{eq.unified_PG} the unified PG. Intuitively, the unified PG is determined by both the data and model, where the data influences it by the value network $Q_{\omega}$ and the model by the differentiable trajectory $\hat{\tau}(s_t)$. Its relation with the other PGs can be revealed by the following theorem.

\begin{theorem}\label{theorem.model_pg}
The data-driven PG has the following relation with the unified PG:
\begin{equation*}
    \nabla_{\theta}J^{\text{Data}}(\theta) = \nabla_{\theta}J_0(\theta),
\end{equation*}
and if $f=p$ and the $\rho^{\pi_{\theta}}$ is the stationary distribution, the model-driven PG and the unified PG has the following equivalence:
\begin{equation}
\nonumber
\nabla_{\theta} J^{\text{Model}}(\theta) =\frac{1}{1-\gamma}\nabla_{\theta}J_{\infty}(\theta).
\end{equation}
\end{theorem}
\begin{proof}
See Appendix \ref{appendix.proof_of_model_pg}.
\end{proof}

Theorem \ref{theorem.model_pg} shows that both the data-driven and model-driven PGs are special cases of the unified PG, which explains the behaviors of these methods:
From \eqref{eq.unified_PG}, it is clear that the accuracy of the unified PG depends on model predictive errors and Q-value estimate errors. In particular, as the predictive horizon $n$ increases, the influence of estimate errors decays, while model errors accumulate continuously. As a result, the data-driven PG heavily relies on the Q-value estimate, whose inaccuracy especially in early training stage makes the PG actually random and thus slow convergence. Specifically, the convergence speed means the needed policy update number to reach the asymptotic performance. By contrast, the model-driven PG can greatly reduce the reliance on inaccurate Q-value estimate and converge fast. However, its asymptotic performance is inevitably damaged by the model predictive errors. That is because the model-driven PG actually  changes the original objective function \eqref{eq.objective_function} by replacing the environment dynamics with the model. Next, we will instead investigate the unified PG to give insight in terms of its bias.

\subsection{Bias analysis of the data/model PGs}\label{subsection.upper_bound}
In this section, we will derive the error of the unified PG and discuss its relation with model predictive errors and Q-value estimate errors, so as to provide theoretical guidance for the development of the MPG algorithm. For simplicity, we will derive the upper bound of the error in the case of deterministic dynamics and models. 
% But our results can be easily extended to stochastic dynamics and models by the reparameterization trick. 
Besides, throughout this paper, we will use $\Vert \cdot \Vert$ to denote the Euclidean norm $\Vert \cdot \Vert_2$.

The bias of the unified PG is as follows,
\begin{equation}
\label{eq.unified_PG_bias}
\begin{aligned}
    &\Vert \nabla_{\theta}J_n(\theta) - \nabla_{\theta}J(\theta) \Vert\\
    % &=\bigg\Vert \mathbb{E}_{\hat{s}_t \sim \rho^{\pi_{\theta}}}\left\{\bigg(\sum_{l=t}^{n-1+t} \gamma^{l-t}\nabla_{\hat{a}_t}\hat{r}_l + \gamma^n \nabla_{\hat{a}_t} \hat{Q}_{w, t+n}\bigg)\bigg|_{\hat{a}_t=\pi_{\theta}(\hat{s}_t)}\nabla_{\theta}\pi_{\theta}(\hat{s}_t) \right\}-
    % \mathbb{E}_{\hat{s}_t \sim \rho^{\pi_{\theta}}}\left\{\bigg(\sum_{l=t}^{n-1+t} \gamma^{l-t}\nabla_{\hat{a}_t}r_l + \gamma^n \nabla_{\hat{a}_t} q^{\pi_{\theta}}_{t+n}\bigg)\bigg|_{\hat{a}_t=\pi_{\theta}(\hat{s}_t)}\nabla_{\theta}\pi_{\theta}(\hat{s}_t) \right\} \bigg\Vert\\
    =&\bigg\Vert \Edl{s_t}{\nabla_{\theta}\pi_{\theta}\bigg(\sum_{l=t}^{t+n-1} \gamma^{l-t} (\nabla_{a_t}\hat{r}_l-\nabla_{a_t} r_l)+}\\ &\qquad\qquad\qquad\rVert{\rc{\gamma^{n}(\nabla_{a_t} \hat{Q}_{\omega, t+n}-\nabla_{a_t} q^{\pi_{\theta}}_{t+n})\bigg)}}.
\end{aligned}
\end{equation}
where we expand $q^{\pi_{\theta}}(s, a)$ in \eqref{eq.pg} by $n$-step Bellman recursion, and denote $q^{\pi_{\theta}}_{t+n}:=q^{\pi_{\theta}}(s_{t+n},a_{t+n}), n>0$. All gradient terms in \eqref{eq.unified_PG_bias} can be easily calculated according to the chain rule. See Appendix \ref{appendix.chain_rule} for details.

To further derive the bias of the unified PG, we first define the model predictive error and the value estimate error. First, the model error is defined as the state predictive error, i.e., $\delta(s, a) = p(s, a)-f(s, a)$. About the long-term property of this error, we have the following condition:
% \begin{condition}\label{condition.model_error_condition}
% $p(s, a)=f(s, a) + \delta(s, a)$, where $\delta(s,a)$ is an error function of model and  $\Vert\delta(s,a) \Vert$ is small enough. Beside, $\Vert s_{t+i}-\hat{s}_{t+i} \Vert, i\ge 1$ is also small enough.
% \end{condition}
\begin{condition}\label{condition.model_error_condition}
The $i$-step model predictive error grows linearly with prediction horizon $i$, i.e., $\sup_{s_t\in \mathcal{S}, a_t\in \mathcal{A}}\Vert s_{t+i}-\hat{s}_{t+i}\Vert \le c_m i$, where $c_m$ is a constant.
\end{condition}
In following analysis, the growing rate of the model predictive error, i.e., $c_m$, will be regarded as an important representation of the model error.

Second, the Q-value estimate error refers to the distance between the Q network $Q_{\omega}$ and Q-value function $q^{\pi_{\theta}}$. The estimate error originates from two parts. First of all, the Q network is learned from interactive data that contains state noise $\epsilon$. The noise essentially changes the environment dynamics, making the true Q-value function inaccessible. We denote the noisy Q-value function under the noisy dynamics as $q^{\pi_{\theta},\epsilon}$. Then, the Q-value function need to be fitted by a parameterized function, where the approximate error is unavoidable. To quantize this error, we give the gradient upper bound as the indicator, as shown in Condition \ref{condition.data_error_condition}:

\begin{condition}\label{condition.data_error_condition}
The difference of the gradient of the approximate function and the noisy Q-value is bounded.
\begin{equation}
\nonumber
\begin{aligned}
\sup_{s \in \mathcal{S}}\Vert \nabla_s Q_{\omega}(s,\pi_{\theta}(s)) - \nabla_s q^{\pi_{\theta}, \epsilon}(s, \pi_{\theta}(s)) \Vert &= B_{qs}\\
\sup_{s \in \mathcal{S}, a \in \mathcal{A}}\Vert \nabla_a Q_{\omega}(s,a) - \nabla_a q^{\pi_{\theta}, \epsilon}(s, a) \Vert &= B_{qa}
\end{aligned}
\end{equation}
where the bounds are expected to be small when we have well estimated Q-values, and vice versa.
\end{condition}

The above condition describes the relation between the approximate function and the noisy Q-value function. To bridge the approximate function and the true Q-value function, we further explore the relation between the noisy and true Q-value function by the following lemma:
\begin{lemma}\label{lemma.data_error}
Given regular conditions of continuity and boundedness shown in Appendix \ref{appendix.regular_conditions}, we have
\begin{equation}
\nonumber
\begin{aligned}
&\sup_{s\in\mathcal{S}}\Vert \nabla_s q^{\pi_{\theta}}(s, \pi_{\theta}(s)) - \nabla_s q^{\pi_{\theta},\epsilon}(s, \pi_{\theta}(s))\Vert \le o(\mathbb{E}_{\epsilon}\left\{\Vert\epsilon\Vert\right\})\\
&\sup_{s\in\mathcal{S}, a\in\mathcal{A}}\Vert \nabla_a q^{\pi_{\theta}}(s, a) - \nabla_a q^{\pi_{\theta},\epsilon}(s, a)\Vert \le o(\mathbb{E}_{\epsilon}\left\{\Vert\epsilon\Vert\right\})
\end{aligned}
\end{equation}
\end{lemma}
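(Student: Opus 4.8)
The plan is to use the fact that the gradient of a value function obeys a Bellman-type recursion obtained by differentiating the ordinary Bellman equation, and then to control the gap between the true and noisy gradients by a contraction argument. Write $V^{\pi_\theta}(s):=q^{\pi_\theta}(s,\pi_\theta(s))$, $V^{\pi_\theta,o}(s):=q^{\pi_\theta,o}(s,\pi_\theta(s))$, and abbreviate $r^\pi(s):=r(s,\pi_\theta(s))$, $p^\pi(s):=p(s,\pi_\theta(s))$. Since the underlying dynamics $p$ is deterministic while the noisy MDP adds $\epsilon$ at each transition, the two value functions satisfy
\begin{align*}
V^{\pi_\theta}(s) &= r^\pi(s)+\gamma V^{\pi_\theta}(p^\pi(s)),\\
V^{\pi_\theta,o}(s) &= r^\pi(s)+\gamma\,\mathbb{E}_\epsilon\{V^{\pi_\theta,o}(p^\pi(s)+\epsilon)\}.
\end{align*}
Differentiating with respect to $s$ (the noise $\epsilon$ is independent of $s$ and so passes through the expectation) and writing $G(s):=\nabla_s V^{\pi_\theta}(s)$, $G^o(s):=\nabla_s V^{\pi_\theta,o}(s)$, the chain rule yields
\begin{align*}
G(s) &= \nabla_s r^\pi(s)+\gamma\,(\nabla_s p^\pi(s))^{\top}G(p^\pi(s)),\\
G^o(s) &= \nabla_s r^\pi(s)+\gamma\,(\nabla_s p^\pi(s))^{\top}\mathbb{E}_\epsilon\{G^o(p^\pi(s)+\epsilon)\}.
\end{align*}

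Subtracting cancels the common reward gradient and leaves
\[
G(s)-G^o(s)=\gamma\,(\nabla_s p^\pi(s))^{\top}\big[G(p^\pi(s))-\mathbb{E}_\epsilon\{G^o(p^\pi(s)+\epsilon)\}\big].
\]
Next I would add and subtract $\mathbb{E}_\epsilon\{G(p^\pi(s)+\epsilon)\}$ inside the bracket, splitting it into a \emph{noise term} $G(p^\pi(s))-\mathbb{E}_\epsilon\{G(p^\pi(s)+\epsilon)\}$ and a \emph{recursive term} $\mathbb{E}_\epsilon\{G(p^\pi(s)+\epsilon)-G^o(p^\pi(s)+\epsilon)\}$. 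By Jensen's inequality and the Lipschitz continuity of the true gradient $G$ (supplied by the regular conditions with constant $L_G$), the noise term is at most $L_G\,\mathbb{E}_\epsilon\{\|\epsilon\|\}$, while the recursive term is at most $D:=\sup_s\|G(s)-G^o(s)\|$, which is finite since both value functions have bounded gradients. Using the boundedness of the dynamics Jacobian, $\|\nabla_s p^\pi(s)\|\le L_p$, and taking the supremum over $s$ gives the scalar inequality $D\le\gamma L_p\big(L_G\,\mathbb{E}_\epsilon\{\|\epsilon\|\}+D\big)$.

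Under the contraction condition $\gamma L_p<1$ (again from the regular conditions) this rearranges to $D\le\frac{\gamma L_p L_G}{1-\gamma L_p}\,\mathbb{E}_\epsilon\{\|\epsilon\|\}$, which is precisely the first claimed bound with $\mathrm{const}=\gamma L_p L_G/(1-\gamma L_p)$. For the second bound I would instead differentiate the $q$-recursion $q^{\pi_\theta}(s,a)=r(s,a)+\gamma V^{\pi_\theta}(p(s,a))$ and its noisy counterpart with respect to $a$; since the action enters only through the first transition, the gradient difference reduces to $\gamma(\nabla_a p(s,a))^{\top}[G(p(s,a))-\mathbb{E}_\epsilon\{G^o(p(s,a)+\epsilon)\}]$, and the same noise/recursive split bounds it by $\gamma L_{p,a}(L_G\,\mathbb{E}_\epsilon\{\|\epsilon\|\}+D)$, which is again $O(\mathbb{E}_\epsilon\{\|\epsilon\|\})$ because $D$ has already been controlled. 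The main obstacle is extracting the noise term at the rate $\mathbb{E}_\epsilon\{\|\epsilon\|\}$: this forces the true value gradient $G$ to be Lipschitz (a $C^{1,1}$, not merely $C^1$, requirement on the value function), so the crux is verifying that this smoothness is indeed guaranteed by the regular conditions; the remainder is standard geometric-contraction bookkeeping hinging on $\gamma L_p<1$.
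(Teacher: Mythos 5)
Your route---differentiating the Bellman equations for $V^{\pi_\theta}$ and $V^{\pi_\theta,o}$, subtracting, splitting into a noise term and a recursive term, and closing a scalar fixed-point inequality---is genuinely different from the paper's argument, which instead telescopes the difference $q^{\pi_\theta}-q^{\pi_\theta,o}$ into the sum $\sum_{j\ge 0}(W_{j+1}-W_j)$ (the Luo et al.\ decomposition, where $W_j$ switches from true to noisy dynamics after $j$ steps), so that each term carries an explicit factor $\gamma^{j+1}$ and the bound follows from summing a geometric series in $\gamma$ alone. The paper then only needs the Lipschitzness of $\nabla_s q^{\pi_\theta,o}$ and the \emph{uniform} bound $B_{\nabla ps}$ on the \emph{multi-step composed} Jacobian $\nabla_{s_t}p(s_{t+j},\pi_\theta(s_{t+j}))$, $j\ge 1$, arriving at the constant $\tfrac{\gamma}{1-\gamma}L_{\nabla q^o}B_{\nabla ps}$ (and an extra $B_{\nabla pa}$ for the action-gradient case).

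The gap in your version is the contraction hypothesis. Your final step rearranges $D\le\gamma L_p\bigl(L_G\,\mathbb{E}_\epsilon\{\|\epsilon\|\}+D\bigr)$ into a bound on $D$, which is only possible if $\gamma L_p<1$, where $L_p$ is the \emph{one-step} dynamics Jacobian bound. You assert this is ``again from the regular conditions,'' but it is not: Appendix~C lists no condition of the form $\gamma L_p<1$ (nor even a named one-step Jacobian bound for $p$ composed with $\pi_\theta$ in the state argument); what it assumes is that the $j$-step composed Jacobian is bounded by $B_{\nabla ps}$ uniformly in $j$. These assumptions are not interchangeable---a one-step bound $L_p>1$ is perfectly compatible with the paper's conditions (the composed Jacobians need not blow up), yet it renders your inequality vacuous, since $D\le\gamma L_p D + (\cdots)$ then gives nothing. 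This is precisely what the telescoping decomposition buys the paper: the geometric decay comes from the discount factor attached to each switching index $j$, not from any contraction property of the dynamics, so no condition like $\gamma L_p<1$ is needed. To repair your proof you would either have to add $\gamma L_p<1$ as an explicit extra assumption (weakening the lemma relative to the paper's), or replace the one-shot fixed-point inequality with an unrolled recursion in which the $j$-th unrolling carries $\gamma^j$ times a multi-step Jacobian bounded by $B_{\nabla ps}$---at which point you have essentially rederived the telescoping sum. A secondary, smaller difference: you invoke Lipschitzness of the \emph{true} gradient $G$ to control the noise term, whereas the paper uses Lipschitzness of the \emph{noisy} gradient $\nabla_s q^{\pi_\theta,o}$; both are listed in the regular conditions, so this part is fine.
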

\begin{proof}
See Appendix \ref{appendix.proof_of_lemma_data_error}.
\end{proof}

The lemma shows the gradient difference of the Q-value caused by the observation noise can be bounded, which together with the condition \ref{condition.data_error_condition} helps to establish gradient difference between $Q_\omega$ and $q^{\pi_{\theta}}$ when deriving the following lemma. So far, we have presented necessary elements about the model and data error. Then, using it, we study the bias of the unified PG \eqref{eq.unified_PG_bias}. Formally, the model predictive error involves in all future states, thus influences both the term $\nabla_{a_t}\hat{r}_l-\nabla_{a_t}r_l$ and $\nabla_{a_t}\hat{Q}_{\omega,t+n}-\nabla_{a_t}q^{\pi_{\theta}}_{t+n}$. On the other hand, the value estimate error only influences the latter term. Mathematically, we derive the upper bound of these two terms regarded with the quantitative model and data errors:
% We first study the upper bound of the two terms and then $\Vert \nabla_{\hat{a}_t}\hat{r}_{t+k} - \nabla_{\hat{a}_t}r_{t+k}\Vert, k\ge0$ and $\Vert \nabla_{\hat{a}_t} \hat{Q}_{w, t+k}-\nabla_{\hat{a}_t} q^{\pi_{\theta}}_{t+k}\Vert, k\ge0$, and derive the following Lemma. 
\begin{lemma}\label{lemma.r_q_grad_upper_bound}
Given Condition \ref{condition.model_error_condition}, Condition \ref{condition.data_error_condition} and regular conditions of continuity and boundedness shown in Appendix \ref{appendix.regular_conditions}, we have
\begin{equation}
\nonumber
\begin{aligned}
&\sup_{s_t\in \mathcal{S}, a_t\in \mathcal{A}}\Vert \nabla_{a_t}\hat{r}_{t+k} - \nabla_{a_t}r_{t+k}\Vert \le \left\{
    \begin{array}{lc}
        o((c_m k)^k), & k \ge 1\\
        0, & k=0\\
    \end{array}\right. \\
&\sup_{s_t\in \mathcal{S}, a_t\in \mathcal{A}}\Vert \nabla_{a_t} \hat{Q}_{\omega, t+k}-\nabla_{a_t} q^{\pi_{\theta}}_{t+k}\Vert \\
&\qquad\qquad\qquad\le \left\{ 
    \begin{array}{lc}
        o((c_m k)^k + B_{qs} +\mathbb{E}_{\epsilon}\left\{\Vert\epsilon\Vert\right\}), & k \ge 1\\
        o(B_{qa}+\mathbb{E}_{\epsilon}\left\{\Vert\epsilon\Vert\right\}), & k=0\\
    \end{array}\right. \\
\end{aligned}
\end{equation}
\end{lemma}

\begin{proof}
See Appendix \ref{appendix.proof_of_lemma_r_q_grad_bound}.
\end{proof}

Lemma \ref{lemma.r_q_grad_upper_bound} shows that the model predictive error and the value estimate error determine the upper bound of the reward gradient and the Q-value gradient difference. Also, it reveal that the model inaccuracy $c_m$ can be exponentially amplified by the predictive horizon $k$. Note that Lemma \ref{lemma.r_q_grad_upper_bound} only analyzes key components of the unified PG bias \eqref{eq.unified_PG_bias}. With the help of this lemma, we further acquire the unified PG bias by the following theorem.

\begin{theorem}\label{theorem.bias_upper_bound}
Given Condition \ref{condition.model_error_condition}, Condition \ref{condition.data_error_condition} , and regular conditions of continuity and boundedness shown in Appendix \ref{appendix.regular_conditions}, we have
\begin{equation}
\nonumber
\begin{aligned}
&\Vert \nabla_{\theta}J_n(\theta) - \nabla_{\theta}J(\theta) \Vert \le\\
&\left\{ 
    \begin{array}{lc}
        o(n(\gamma c_m n)^n + \gamma^n (B_{qs}+\mathbb{E}_{\epsilon}\left\{\Vert\epsilon\Vert\right\})), & n \ge 1\\
        o(B_{qa}+\mathbb{E}_{\epsilon}\left\{\Vert\epsilon\Vert\right\}), & n=0\\
    \end{array}\right. \\
\end{aligned}
\end{equation}
\end{theorem}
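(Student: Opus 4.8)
The plan is to start directly from the exact expression for the bias in \eqref{eq.unified_PG_bias} and reduce the theorem to the per-step gradient bounds already supplied by Lemma~\ref{lemma.r_q_grad_upper_bound}. First I would apply the triangle inequality twice: once to move the norm inside the expectation $\mathbb{E}_{\hat{s}_t}$, and once to split the integrand into the $n$ reward-gradient differences and the single terminal Q-gradient difference. Under the boundedness conditions collected in Appendix~\ref{appendix.regular_conditions} the policy Jacobian $\nabla_{\theta}\pi_{\theta}$ has bounded norm, so it factors out of every term as a multiplicative constant. After reindexing with $k=l-t$, this yields
\begin{equation}
\nonumber
\Vert \nabla_{\theta}J_n(\theta) - \nabla_{\theta}J(\theta) \Vert \le \mathrm{const}\cdot\left(\sum_{k=0}^{n-1}\gamma^{k}U_r(k) + \gamma^{n}U_q(n)\right),
\end{equation}
into which the closed forms of $U_r$ and $U_q$ from Lemma~\ref{lemma.r_q_grad_upper_bound} can be substituted.

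The case $n=0$ then follows immediately: the reward sum is empty, only the $\gamma^{0}U_q(0)$ term survives, and $U_q(0)=B_{qa}+\mathrm{const}\cdot\mathbb{E}_{\epsilon}\{\Vert\epsilon\Vert\}$ reproduces exactly the claimed $\mathrm{const}\cdot(B_{qa}+\mathbb{E}_{\epsilon}\{\Vert\epsilon\Vert\})$. For $n\ge 1$ I would separate the right-hand side into two groups. The contributions carrying the estimation error $B_{qs}$ and the noise $\mathbb{E}_{\epsilon}\{\Vert\epsilon\Vert\}$ occur only inside $U_q(n)$ and are weighted by $\gamma^{n}$, producing the second summand $\gamma^{n}\,\mathrm{const}\cdot(B_{qs}+\mathbb{E}_{\epsilon}\{\Vert\epsilon\Vert\})$. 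All remaining model-error contributions are of the form $\gamma^{k}o((c_m k)^{k})$ for $k=1,\dots,n$ (recalling $U_r(0)=0$), and these must be collapsed into the single term $o(n(\gamma c_m n)^{n})$.

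That collapse is the step I expect to be the main obstacle. The idea is to rewrite $\gamma^{k}(c_m k)^{k}=(\gamma c_m k)^{k}$ and to argue that, viewed as a function of $k$, $(\gamma c_m k)^{k}$ is eventually increasing, so that over the range $1\le k\le n$ its maximum is attained at $k=n$ and equals $(\gamma c_m n)^{n}$. Bounding each of the at most $n$ summands by this maximum gives $\sum_{k=1}^{n}(\gamma c_m k)^{k}\le n(\gamma c_m n)^{n}$, and since every summand is of strictly higher order than the corresponding power, the whole sum inherits the $o(\cdot)$ and becomes $o(n(\gamma c_m n)^{n})$. Some care is needed for the small-$k$ terms, where $(\gamma c_m k)^{k}$ need not yet be monotone; however these form only finitely many bounded terms that are absorbed into the leading order. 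Combining the two groups yields the stated bound for $n\ge 1$ and completes the proof.
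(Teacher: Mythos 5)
Your proposal matches the paper's own proof essentially step for step: the paper likewise writes the bias as an expectation of a difference term $D_n$, applies the triangle inequality, factors out the bound $B_{\nabla\theta}$ on the policy Jacobian, and arrives at $B_{\nabla\theta}\bigl(\sum_{k=0}^{n-1}\gamma^{k}U_r(k)+\gamma^{n}U_q(n)\bigr)$ before substituting Lemma~\ref{lemma.r_q_grad_upper_bound}. The only difference is that you spell out the final collapse of the model-error terms into $o(n(\gamma c_m n)^{n})$, a step the paper asserts without justification, and your treatment of it is at least as careful as the original.
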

\begin{proof}
See Appendix \ref{appendix.proof_of_bias_upper_bound}.
\end{proof}
% 公式划分为模型与value两项 最好体现前面的系数。
From the Theorem \ref{theorem.bias_upper_bound}, it can be seen that the upper bound of the unified PG bias is determined by model predictive error $c_m$, Q-value estimate error (including observation error $\epsilon$ and function approximate error $B_{qs},B_{qa}$), and the predictive horizon $n$. Specifically, when the prediction horizon $n=0$, i.e., the data-driven PG \eqref{eq.pg_data}, this bias is only influenced by the Q-value estimate eror, i.e., $B_{qa}+\mathbb{E}_{\epsilon}\left\{\Vert\epsilon\Vert\right\}$. When the prediction horizon $n\ge1$, the unified PG bias is related to both the model error $n(\gamma c_m n)^n$ and the Q-value estimate error $\gamma^n (B_{qs}+\mathbb{E}_{\epsilon}\left\{\Vert\epsilon\Vert\right\})$. And, with the increase of the prediction horizon, the PG bias caused by the model predictive error grows exponentially. On the contrary, that caused by the Q-value estimate error is controlled by the discount factor $\gamma$ and decreases exponentially. When the prediction horizon increases to infinity, the PG error is only related to model errors. However, in that case, the model errors can make the unified PG arbitrarily bad.

At a high level, the Theorem \ref{theorem.bias_upper_bound} gives an explicit expression between the PG bias and the predefined data and model errors. By the conclusion, we can gain some insight about the relative values of the data-driven PG bias and the model-driven PG bias: At the beginning stage of training, the Q-value function is initialized randomly, offering meaningless gradients. In this stage, the function approximate error ($B_{qs},B_{qa}$) dominates, so we can safely omit the model predictive error term $n(\gamma c_m n)^n$ and get that the model-driven PG bias is approximately $\gamma^n$ times that of the data-driven PG. At the late stage of training, the Q-value function gradually converges, making the Q-value estimate error negligible. In this stage, instead, the model predictive error dominates. We can discard the data-related term and find that the model-driven PG bias is exponentially greater than that of the data-driven PG. These findings can be applied to the weight design of the MPG.

\subsection{Mixed policy gradient}
Building upon the conclusions derived in the preceding two sections, we further deduce the mathematical form of MPG. Utilizing Equation \eqref{eq.MPG_form} and Theorem \ref{theorem.model_pg}, we obtain:
\begin{equation}
\label{eq.MPG_objective}
\begin{aligned}
    \nabla_{\theta}J^{\text{Mixed}}(\theta) &= w_{\text{data}} \nabla_{\theta}J^{\text{Data}}(\theta) +w_{\text{model}}\nabla_{\theta}J^{\text{Model}}(\theta) \\
    &=w_{\text{data}} \nabla_{\theta}J_0(\theta) + w_{\text{model}} \nabla_{\theta}J_{\infty}(\theta)
\end{aligned}
\end{equation}
where $w_{\text{model}}$ fuses the constant value of $1/(1-\gamma)$ in the second equation. Then we can design proper weights according to Theorem \ref{theorem.bias_upper_bound}. As presented in the next section, $w_{\text{data}}$ and $w_{\text{model}}$ are designed as a function of policy update times $i$, so as to align with the dynamic relations between $\nabla_{\theta}J_0$ and $\nabla_{\theta}J_\infty$.
However, constructing an infinite Bellman recursion is impractical, so we replace $\nabla_{\theta}J_{\infty}(\theta)$ with $\nabla_{\theta}J_{H}(\theta)$, where $H$ is a positive integer.

To calculate the MPG, we first derive its corresponding loss function:
\begin{equation}
\label{eq.MPG_objective}
\begin{aligned}
J^{\text{Mixed}}(\theta)
=& w_{\text{data}} J_0(\theta) + w_{\text{model}} J_{\infty}(\theta)\\
% =&w_0(i)\Ed{s_{t}}{X_0(s_t;\theta)) }+w_H(i)\Ed{s_t}{X_H(\hat{s}_t;\theta)}\\
=&\Ed{s_t}{w_{\text{data}}\hat{Q}_{w, t} + w_{\text{model}} \sum_{l=t}^{\infty} \gamma^{l-t}\hat{r}_l}
\end{aligned}
\end{equation}
where the state $s_t$ obeys $\rho^{\pi_{\theta}}$ by default, which requires to collect data using policy $\pi_{\theta}$. According to \cite{barto2020looking}, the state distribution can be safely replaced with that of an arbitrary policy. That naturally transforms the MPG into a more general off-policy version, allowing us to use states generated by history policies.
% shown as,
% \begin{equation}\label{eq.J_mixed}
% J^{\text{Mixed}}(\theta) = \mathbb{E}_{\hat{s}_{t}\sim\rho^{b}}\left\{w_0 X_0(\hat{s}_t;\theta) + w_H X_H(\hat{s}_t;\theta) \right\}.
% \end{equation}
The computing process of the MPG is as follows: first, sample a batch of state $s_t$ from a buffer $\mathcal{B}$, where all history experience is stored. Then, compute data and model loss for each state and decide the weights $w_{\text{data}}$ and $w_{\text{model}}$. Finally, compute the loss function \eqref{eq.MPG_objective} and take its gradient to get the MPG. The complete algorithm is shown in Algorithm \ref{alg.mpg}.

The MPG algorithm can be interpreted as a way to balance the variance of the data-driven loss and the bias of the model-driven loss, as shown in Figure \ref{fig.interpretation}. The data-driven PG methods construct $J_0(\theta)$ relying on the estimated Q-value. Because the Q-value changes with policies during the learning process, $J_0(\theta)$ is served as the local approximation of the true loss \eqref{eq.objective_function}, which is inaccurate with a large variance in the early training phase, resulting in bad policy updates. But with the Q-value converges, the local approximation is of high quality, then a local optimal solution can be found under certain conditions\cite{konda1999actor,haarnoja2018soft}. For model-driven PG, the loss function $J_{\infty}(\theta)$ is only determined by the model which is invariant during training, but it is biased due to the mismatch between the model and the true environment dynamics, leading to a bad solution. The MPG algorithm mixes the two loss functions dynamically by weighted average to construct a better approximation of the true loss.

\begin{figure*}[htbp]
\centerline{\includegraphics[width=0.55\linewidth]{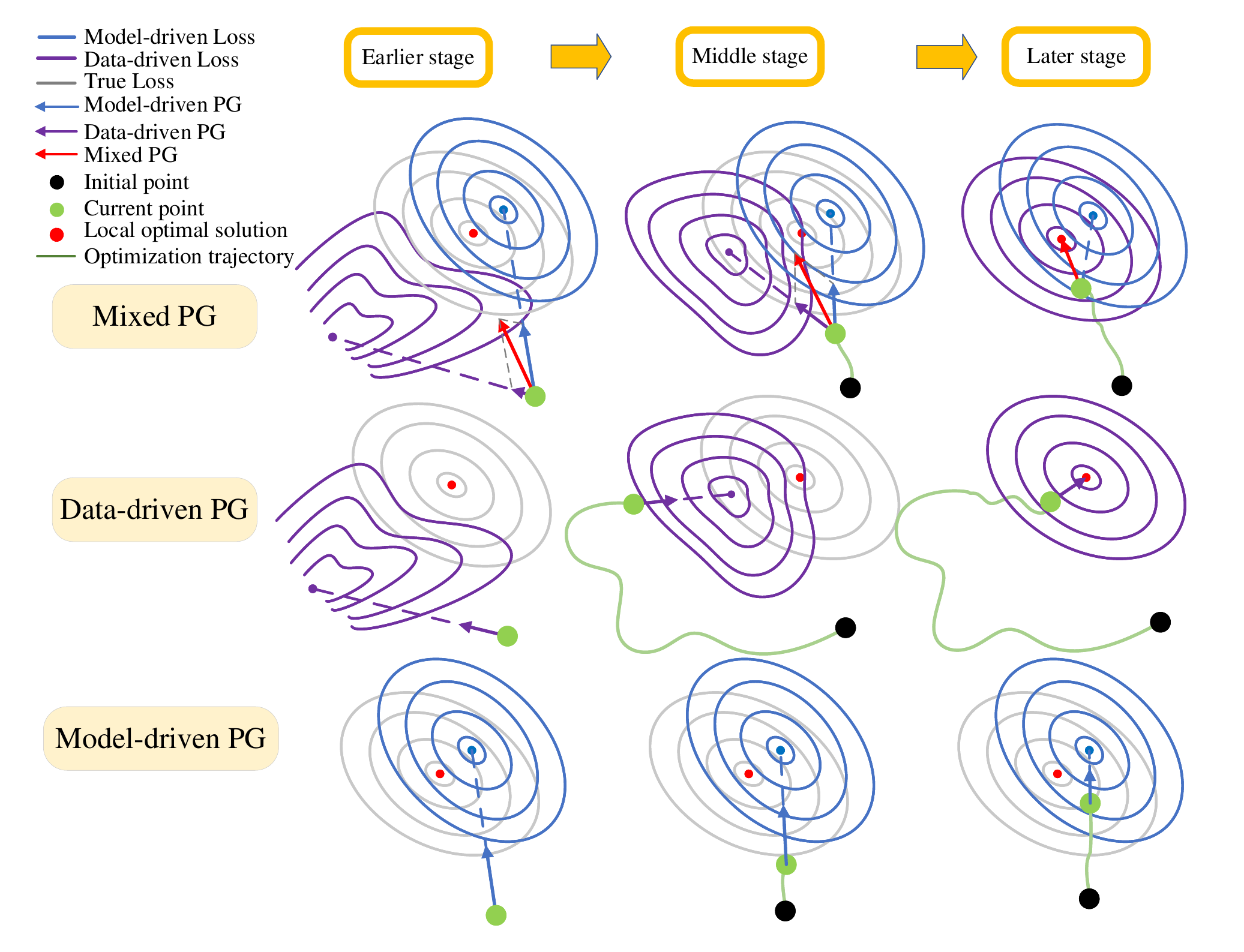}}
\caption{Interpretation of MPG. The loss of data-driven PG converges to the true loss along training process but has large variance in the earlier state. The loss of model-driven PG has low variance but is biased. Mixed PG dynamically adjusts the weights of the data loss and the model loss to construct a better approximation of the true loss function.}
\label{fig.interpretation}
\end{figure*}

\section{Practical algorithm}
\subsection{Weighting method}

To develop the practical algorithm, we first design the PG weights according to the Theorem \ref{theorem.bias_upper_bound}. The basic idea is to realize the worst case optimization, i.e., calculating weights by their respective bias upper bounds. However, some needed quantities to compute the exact upper bounds are hardly obtained, such as the growing rate of the model predictive error $c_m$, the function approximate bound $B_{qs},B_{qa}$, etc. Therefore, we instead focus on the exponential relative relation between the data-driven and the model-driven PGs in different stages of training. Specifically, we propose a rule-based upper ($e_\text{data}$ and $e_\text{model}$) bound for the two types of PG ($\nabla_{\theta}J_0$ and $\nabla_{\theta}J_\infty$) to mimic the same relative relation. The mathematical form of the rule-based error is:
\begin{equation}\label{eq.rule-based_errors}
\begin{aligned}
(e_\text{data}(i),e_\text{model}(i)) &=\left\{
\begin{array}{ll}
 (1,\lambda^H(i)), & i\le T/2 \\        (\lambda^H(i),1), &i>T/2\\
\end{array} \right. ,\\
\lambda(i)
&=\left\{
\begin{array}{ll}
 (1-\eta)+\frac{2i\eta}{T}, & i\le T/2 \\       
 1-\frac{2(i-T/2)\eta}{T}, &i>T/2\\
\end{array}
\right.
\end{aligned}
\end{equation}
where $i$ is the policy iteration times, $T$ is the max iterations, $H$ is a positive integer to establish exponential effect between data and model errors, $\lambda$ is the base number change from $1-\eta$ to 1, and then back to $1-\eta$ with the increase of iteration number, $\eta$ determines the max rate of exponential decay. By such design, the weights can change smoothly with the increase of policy iteration number. Based on the rule-based bias, the weights is finally got by softmax normalization:
\begin{equation}\label{eq.rule-based_weights}
w_\dag(i) = \mathrm{Softmax}(1/e_\dag(i)), \dag\in\{\text{data}, \text{model}\}
\end{equation}

\begin{algorithm}[htbp]
  \caption{Mixed Policy Gradient}
  \label{alg.mpg}
\begin{algorithmic}
  \STATE {\bfseries Initialize:} critic network $Q_{\omega}$ and actor network $\pi_{\theta}$ with random parameters $\omega$, $\theta$, target parameters $\omega' \leftarrow \omega, \theta' \leftarrow \theta$, batch size $N$, initial buffer $\mathcal{B}$ with $N$ samples, update interval $m$, exploration noise $\epsilon$, target smoothing coefficient $\tau$, learning rates $\beta_{\omega}, \beta_{\theta}$
  \FOR{each iteration $k$}
      \STATE Collect a batch of transitions using $\pi_{\theta}$, store in $\mathcal{B}$
      \STATE Sample $N$ transitions $\{s_i, a_i, r_i, s_{i+1}\}_{i=0:N-1}$ from $\mathcal{B}$
      \STATE Update value $\omega \leftarrow \omega - \beta_\omega \nabla_{\omega}J_Q(\omega)$ using \eqref{eq.value_grad}
      \IF{$k \mod m$}
      \STATE Compute $w_{\text{data}}(k), w_{\text{model}}(k)$ by \eqref{eq.rule-based_weights}
      \STATE Compute $J^{\text{Mixed}}(\theta)$ using \eqref{eq.MPG_objective}
      \STATE Update policy $\theta \leftarrow \theta + \beta_{\theta} \nabla_{\theta}J^{\text{Mixed}}(\theta)$
      \STATE Update target networks:
      \STATE \quad $\omega' \leftarrow \tau \omega + (1-\tau) \omega'$
      \STATE \quad \ $\theta' \leftarrow \tau \theta + (1-\tau) \theta'$
      \ENDIF
  \ENDFOR
\end{algorithmic}
\end{algorithm}

\subsection{Value learning}
To learn the Q-value function, we minimize the mean squared error between the approximate value function and target values, i.e.,
\begin{equation}
\nonumber
\min_{\omega} J_Q(\omega) = \Ed{s,a}{\frac{1}{2}(R^{\text{Target}}(s, a) - Q_{\omega}(s,a))^2},
\end{equation}
where $s,a$ obey arbitrary joint distributions. Depending on the way to calculate $R^{\text{Target}}$, we develop two variants of the MPG, i.e., MPG-v1 and MPG-v2, where the former uses $n$-step TD method, and the latter employs clipped double Q method developed by Fujimoto \emph{et al.} \cite{fujimoto2018addressing}.

% and $R^{\text{Target}}$ is calculated by $n$-step Bellman recursion, shown as
% \begin{equation}\label{eq.value_target}
% R^{\text{Target}}(s, a) = \sum_{l=t}^{t+n-1} \gamma^{l-t} r_l + \gamma^{n} Q_{w'}(s_{n}, \pi_{\theta'}(s_{n})) \bigg|_{s_t=s, a_t=a}.
% \end{equation}

\begin{equation}\label{eq.value_target}
\begin{aligned}
&R^{\text{Target}}(s_t, a_t) =\\
&\left\{
\begin{array}{ll}
         \sum_{l=t}^{t+n-1} \gamma^{l-t} r_l + \gamma^{n} Q_{\omega'}(s_{t+n}, \pi_{\theta'}(s_{t+n})), & \text{MPG-v1} \\
         r_t + \gamma \min\limits_{i=1,2}Q_{\omega'_i}(s_{t+1}, \pi_{\theta'}(s_{t+1})), & \text{MPG-v2}\\
\end{array} \right.\\
\end{aligned}
\end{equation}
where $\omega'$ and $\theta'$ are target Q-value function and target policy to stabilize learning. For MPG-v1, the calculation of the $n$-step TD requires running the current policy in the environment starting from each state in the batch. The procedure can be one of the bottlenecks of update throughput if the environment has low sampling efficiency. To solve the problem, we propose a batch reuse technique to improve sample utilization, in which a batch of data and its correlated target are used several times to compute gradients. Besides, the problem can also be alleviated by updating asynchronously.
The value parameters can be optimized with gradients
\begin{equation}\label{eq.value_grad}
\nabla_\omega J_Q(\omega) = -\Ed{s,a}{(R^{\text{Target}}(s, a) - Q_\omega(s,a))\nabla_\omega Q_\omega(s,a)}.
\end{equation}

\subsection{Asynchronous learning architecture}\label{subsection.architecture}
The wall-clock time to converge equals to the product of the required iteration number and the time consumption of per iteration. The former is determined completely by the algorithms, while the latter (inversely proportional to the update throughput) consists of two parts in a serial learning architecture. One is the time for computing the PG, and the other is the time for applying it. Although MPG has great convergence speed w.r.t. iteration number, the computing time of MPG is roughly proportional to the predictive horizon $H$, resulting in lower computation efficiency than data-driven PG methods.
% \textcolor{red}{In practical applications, the computing time of PG can be significantly eliminated by using asynchronous architecture.} 
%it contains BPTT of long horizon trajectories, which increases the calculation time of MPG and further the update time compared to data-driven PG method.
We propose an asynchronous learning architecture to reduce the time consumption per iteration by eliminating the time for waiting a PG to be computed, so as to match the update throughput of data-driven PG methods.
% \textcolor{blue}{Note that more computation resources would not help reduce the update time of data-driven PG method because the lower limit is already achieved with fewer computation resources. 这里没说明白，为什么会有下界，其实可以不提下界这件事情，就是异步加速，不是你的主要创新点} 
In the architecture, Buffers, Actors and Learners are all distributed across multiple processes to improve the efficiency of replay, sampling, and PG computation, as shown in Fig. \ref{fig.architecture}.

The Optimizer is the core module, which is in charge of creating and coordinating other processes. The Optimizer owns the latest parameters, a gradient queue, an experience queue and an update thread, which keeps fetching gradients from the gradient queue and updating the parameters. The Optimizer also creates an Evaluator and several Actors, Buffers and Learners in separate processes. Each Actor owns local parameters and the environment to generate experience. They asynchronously synchronize the parameters from the Optimizer, and send the generated experience to a random Buffer. The Buffers receive the experience collected by the Actors and keep sampling batch data to the experience queue. Each Learner owns local parameters and a loss function defined by an RL algorithm to compute the gradients of the parameters. They asynchronously synchronize the parameters from the Optimizer, sample experience from the experience queue, and send gradients to the gradient queue for updating. The Evaluator contains local parameters, an environment and metrics. It asynchronously synchronizes the parameters from the Optimizer, runs the trained policy in the environment to compute the metrics for evaluating the policies along the training process.

% Using this architecture, the update throughput of the MPG can be enhanced by increasing the number of learners, which accelerates the MPG generation and reduces the average waiting time for a gradient to a low level or even 0. In that case, the architecture completely removes the update bottleneck induced by the BPTT computation, and all the update time is spent on the neural network updating. In this way, the update efficiency of the MPG can be increased to the level of data-driven RL.

% One drawback of the asynchronous learning architecture is the staleness of the gradient, which is defined as the iteration number that has occurred between its parameter synchronization and update operations. We control the staleness by fixing the update queue capacity to a low value, as does in \cite{chen2016revisiting}.

\begin{figure}[htbp]
\centerline{\includegraphics[width=0.95\linewidth]{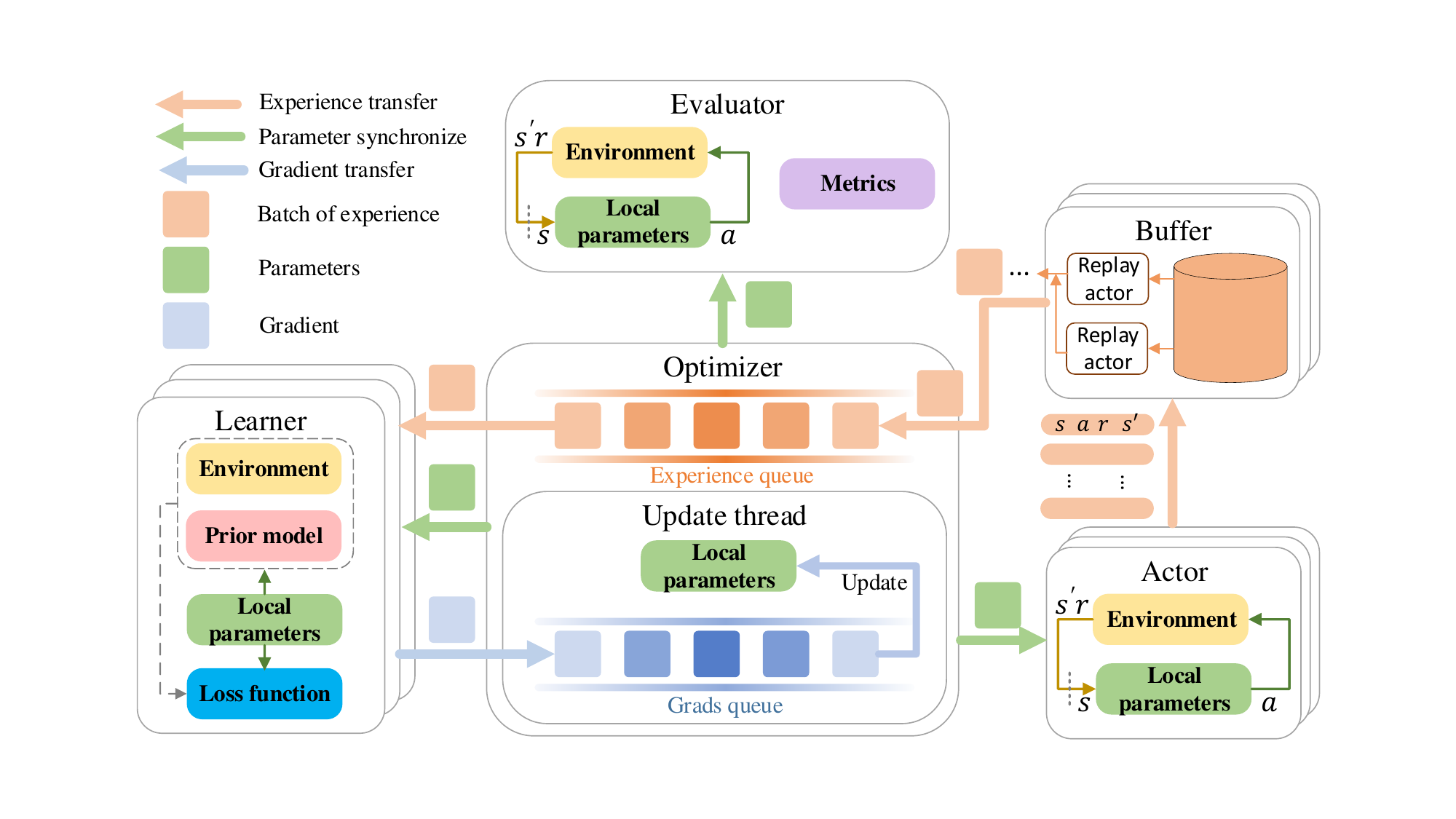}}
\caption{Asynchronous learning architecture. Buffers, Actors and Learners are all distributed across multiple processes to improve the efficiency of replay, sampling, and PG computation. The time consumption per iteration of MPG can be significantly reduced by employing more Learners.}
\label{fig.architecture}
\end{figure}

\section{Simulations}
\subsection{Simulators and prior models}
We conduct experiments on both path tracking task and inverted pendulum task, as shown in Fig. \ref{fig.tasks}. In the path tracking task, the autonomous vehicle is required to track the given path and velocity to minimize the tracking errors. We develop a highly parallelized simulator for this task, which allows hundreds of agents to run simultaneously. The inverted pendulum task is adapted from the Gym package to make its reward an explicit function of states and actions, which is also used by its prior model. The task uses MuJoCo as its simulator. 
The prior models need to be deterministic and differentiable w.r.t the state and action to construct the MPG through BPTT. To test the effectiveness of our methods, we take several measures to enlarge the mismatch between the model and the true dynamics in the simulator, such as adding biased noise in the transition model, using different discrete time step, etc.

\begin{figure}[h]
\centering
\captionsetup[subfigure]{justification=centering}
\subfloat[]{\label{fig.path_tracking_task}\includegraphics[width=0.19\textwidth]{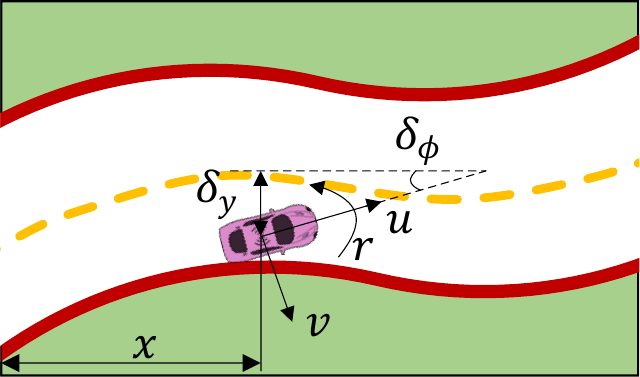}}
\subfloat[]{\label{fig.inverted_pendulum_task}\includegraphics[width=0.2\textwidth]{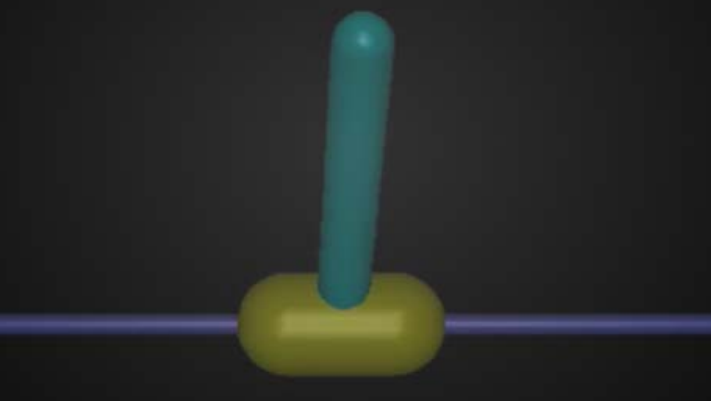}}
\caption{Tasks. (a) Path tracking task: $(\mathcal{S},\mathcal{A})\subset\mathbb{R}^6\times\mathbb{R}^2$. (b) Inverted pendulum task: $(\mathcal{S},\mathcal{A})\subset\mathbb{R}^4\times\mathbb{R}^1$.}
\label{fig.tasks}
\end{figure}

\subsection{Experiments and results}
We compare our algorithm against several state-of-the-art data-driven algorithms, namely deep deterministic policy gradient (DDPG), twin delayed deep deterministic policy gradient algorithm (TD3), soft actor-critic (SAC), and model-driven algorithm, adaptive dynamic programming (ADP). All the methods belong to off-policy category. For fair comparison, we upgrade the value learning of DDPG and ADP as MPG (v1) to get $n$-step DPG and $n$-step ADP, while keeping their respective policy learning methods.

\subsubsection{Performance}
All the algorithms are implemented in the asynchronous learning architecture proposed in section \ref{subsection.architecture}, including 2 Actors, 2 Buffers, and 12 Learners. For action-value functions and policies, we use a fully connected neural network (NN) with 2 hidden layers, consisting of 256 units per layer, with Exponential Linear Units (ELU) each layer \cite{clevert2015fast}. For stochastic policies, we use a Gaussian distribution with mean and covariance given by a NN, where the covariance matrix is diagonal. The Adam method \cite{kingma2014adam} with a polynomial decay learning rate is used to update all the parameters.

We train 5 different runs of each algorithm with different random seeds, with evaluations every 3000 iterations. Each evaluation calculates the average return over 5 episodes without exploration noise, where each episode has a fixed 200 time steps (100 for inverted pendulum task). We illustrate the algorithm performance in terms of its asymptotic performance and convergence speed. The former is measured by the episode return during training, and the latter is measured by the number of iterations needed to reach a certain goal performance. The curves are shown in Fig. \ref{fig.perf_path_tracking} and Fig. \ref{fig.perf_inverted_pendulum}, and results in Table \ref{tab.perf}. The MPG (v1 and v2), $n$-step DPG, and $n$-step ADP use mixed PG, data-driven PG and model-driven PG for policy learning, respectively. 

Results show that, MPG (v1 and v2) outperform all the baseline methods in terms of the asymptotic performance and convergence speed on the path tracking task. Specifically, $n$-step DPG is the same as MPG except for the usage of data-driven PG, but it is much slower than MPG and converges to a worse policy with larger variance. By contrast, $n$-step ADP uses model-driven PG, leading to the worst asymptotic performance because of model errors, despite its fast convergence. MPG also learns considerably faster than TD3 and SAC with better final performance and smaller variance. This is because that these data-driven baselines compute PG totally depending on the action-value function, which is hard to learn well in the early phase of the training process. On the inverted pendulum task, MPG-v2 yields the best result, but MPG-v1 and $n$-step DPG algorithm fail to learn a feasible policy. The reason is that both of them use $n$-step TD learning, which suffers from large variance on that task, i.e., slightly change in the policy may lead to different orders of magnitude of returns.
% Besides, from the result of MPG-v1 and MPG-v2, the heuristic weighting method has more stable improvements and better convergence speed than the rule-based weighting method.
The quantitative results attained by MPG indicate that the mixture of data-driven PG and model-driven PG helps to obtain both their benefits.

\begin{figure*}[htbp]
\centering
\captionsetup[subfigure]{justification=centering}
\subfloat[Training curves]{\label{fig.training_curves_path_tracking}\includegraphics[width=0.49\textwidth]{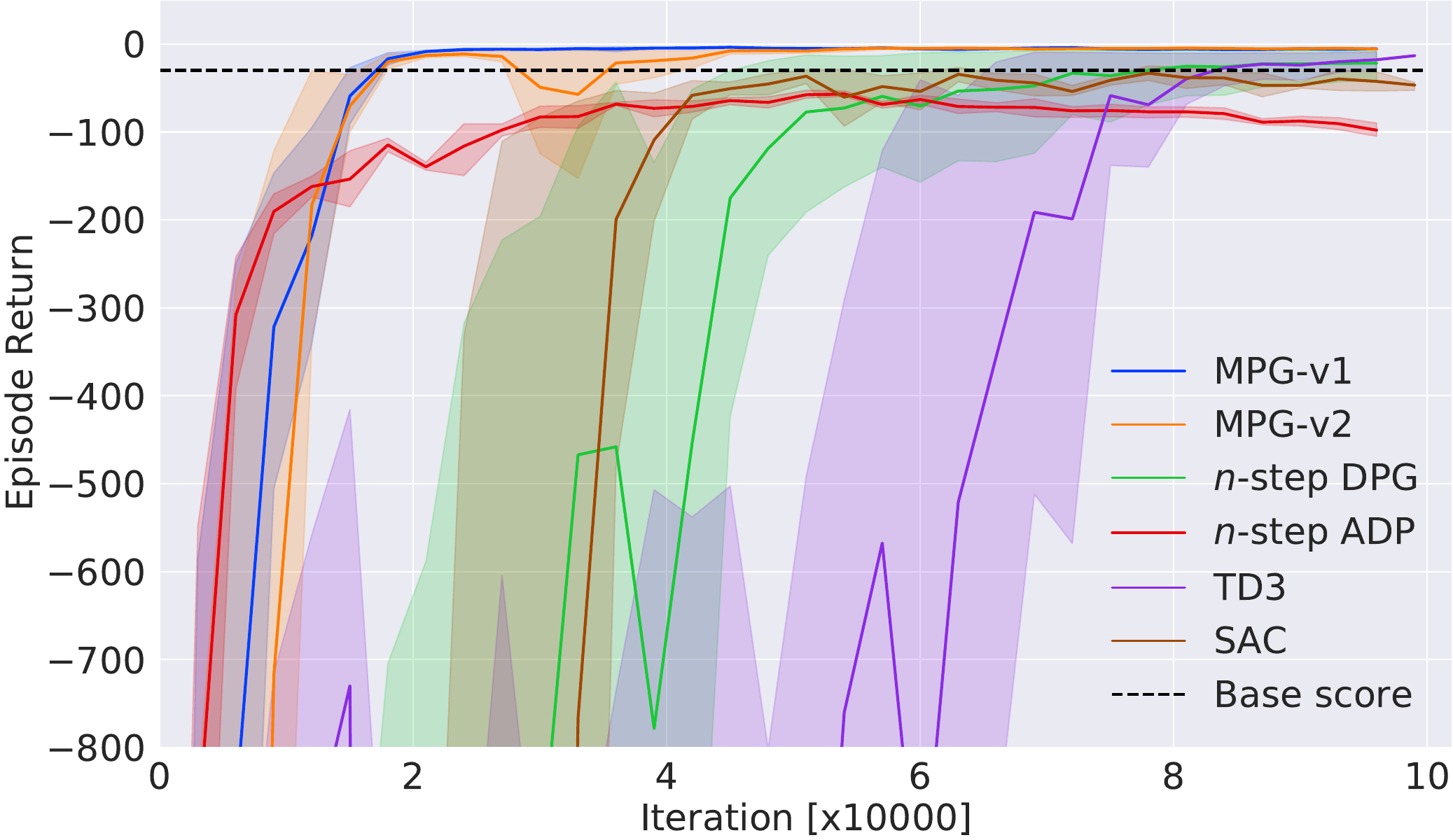}}\quad
\subfloat[Convergence speed]{\label{fig.convergence_speed_path_tracking}\includegraphics[width=0.49\textwidth]{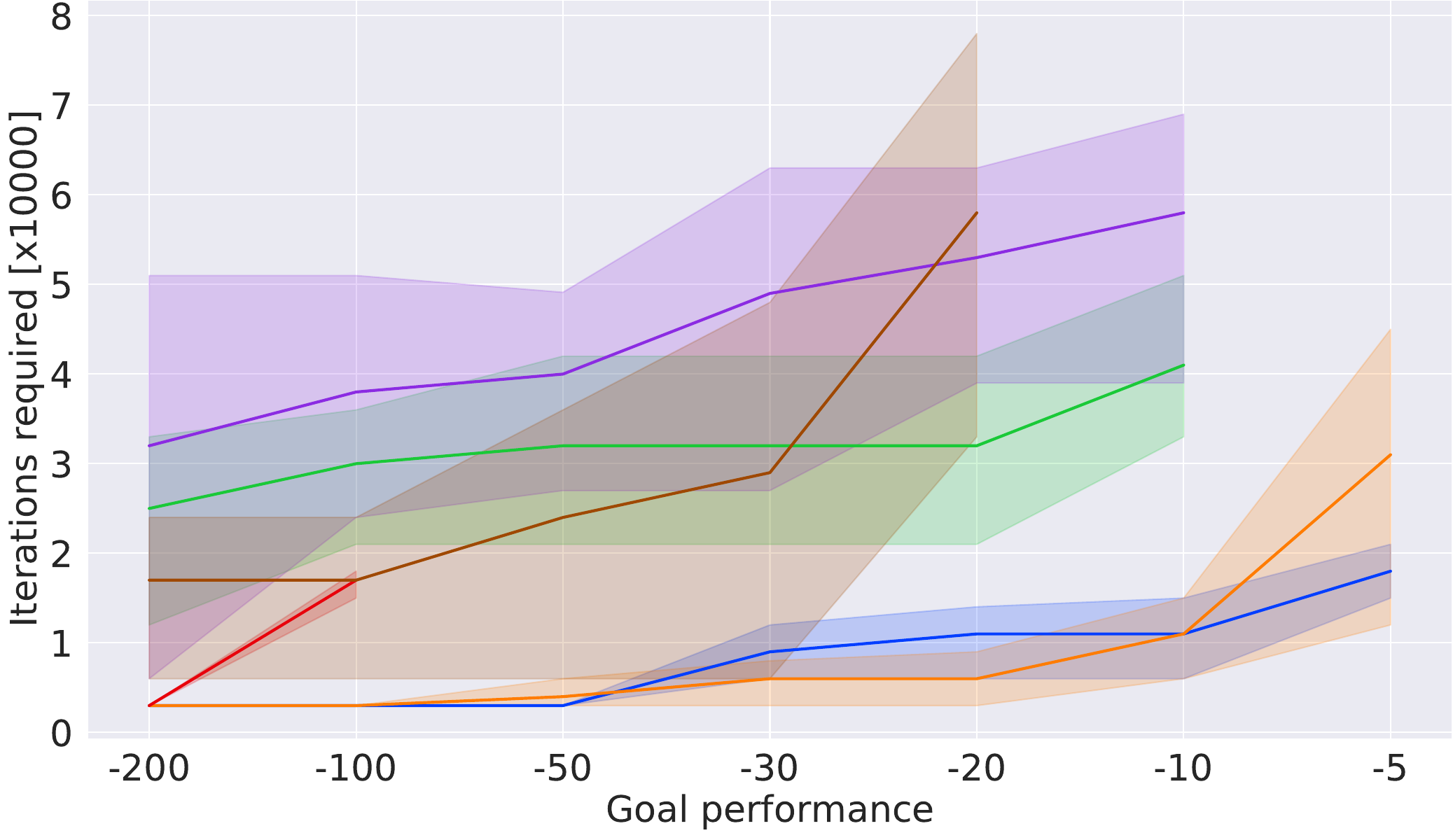}}
\caption{Algorithm comparison in terms of asymptotic performance and convergence speed on the path tracking task. (a) Training curves. The dashed line shows the minimum requirement for the task to work, which is -30 in the task. (b) Convergence speed of different algorithms. The x-coordinate is different goal performance, i.e., episode return, and the y-coordinate is the iteration number needed to reach the goal. The missing part of the curves on some goal performances means that the algorithms never reached these goals during the training process. 
% \cmt{For example, the episode return of the $n$-step ADP (red line) reach -100 around 20000 iterations, but it never get to -50. That is why it has no corresponding y coordinate at -50.}
The solid lines correspond to the mean and the shaded regions correspond to 95\% confidence interval over 5 runs.}
\label{fig.perf_path_tracking}
\end{figure*}

\begin{figure*}[htbp]
\centering
\captionsetup[subfigure]{justification=centering}
\subfloat[Training curves]{\label{fig.training_curves_inverted_pendulum}\includegraphics[width=0.49\textwidth]{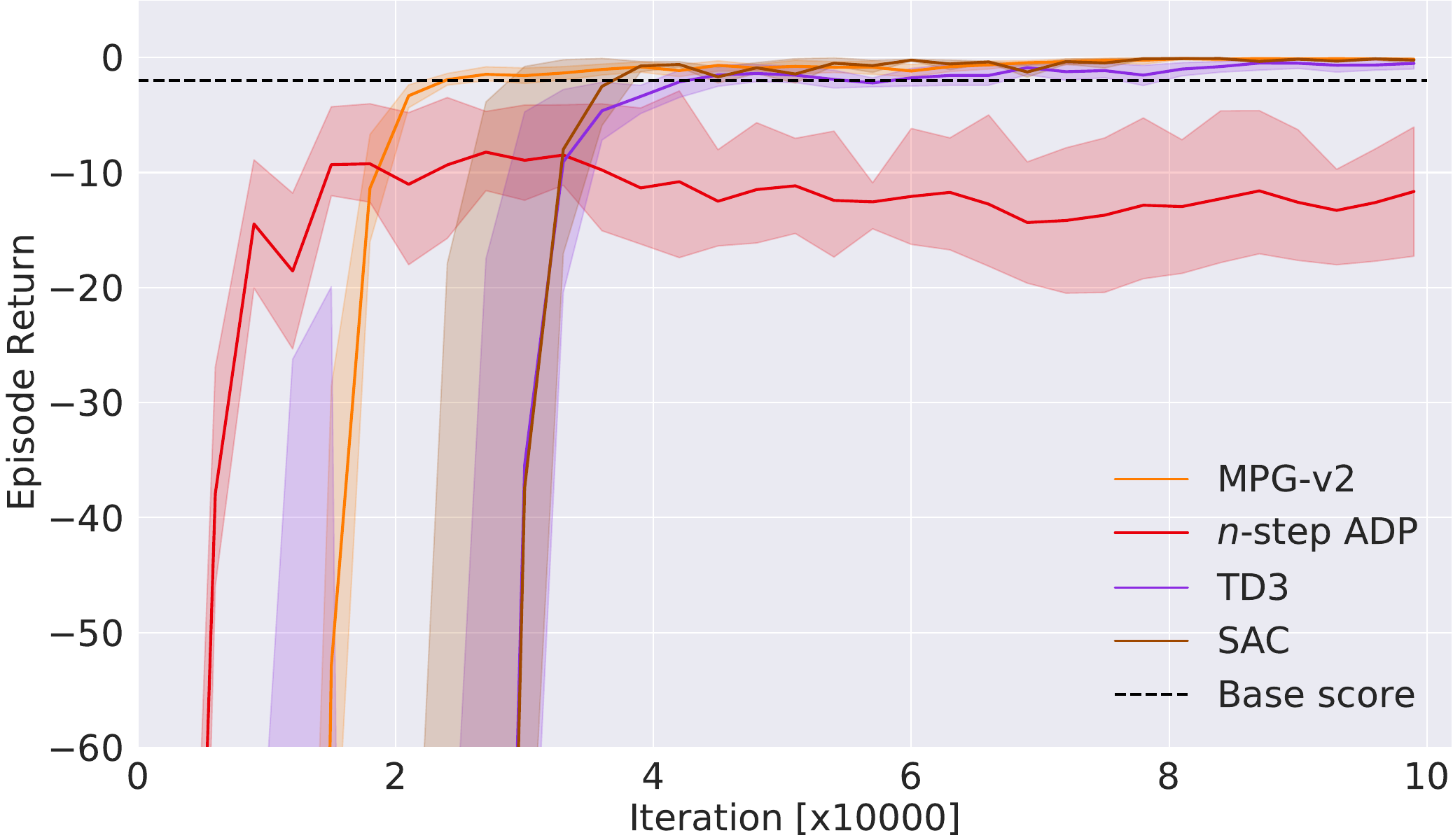}}\quad
\subfloat[Convergence speed]{\label{fig.convergence_speed_inverted_pendulum}\includegraphics[width=0.49\textwidth]{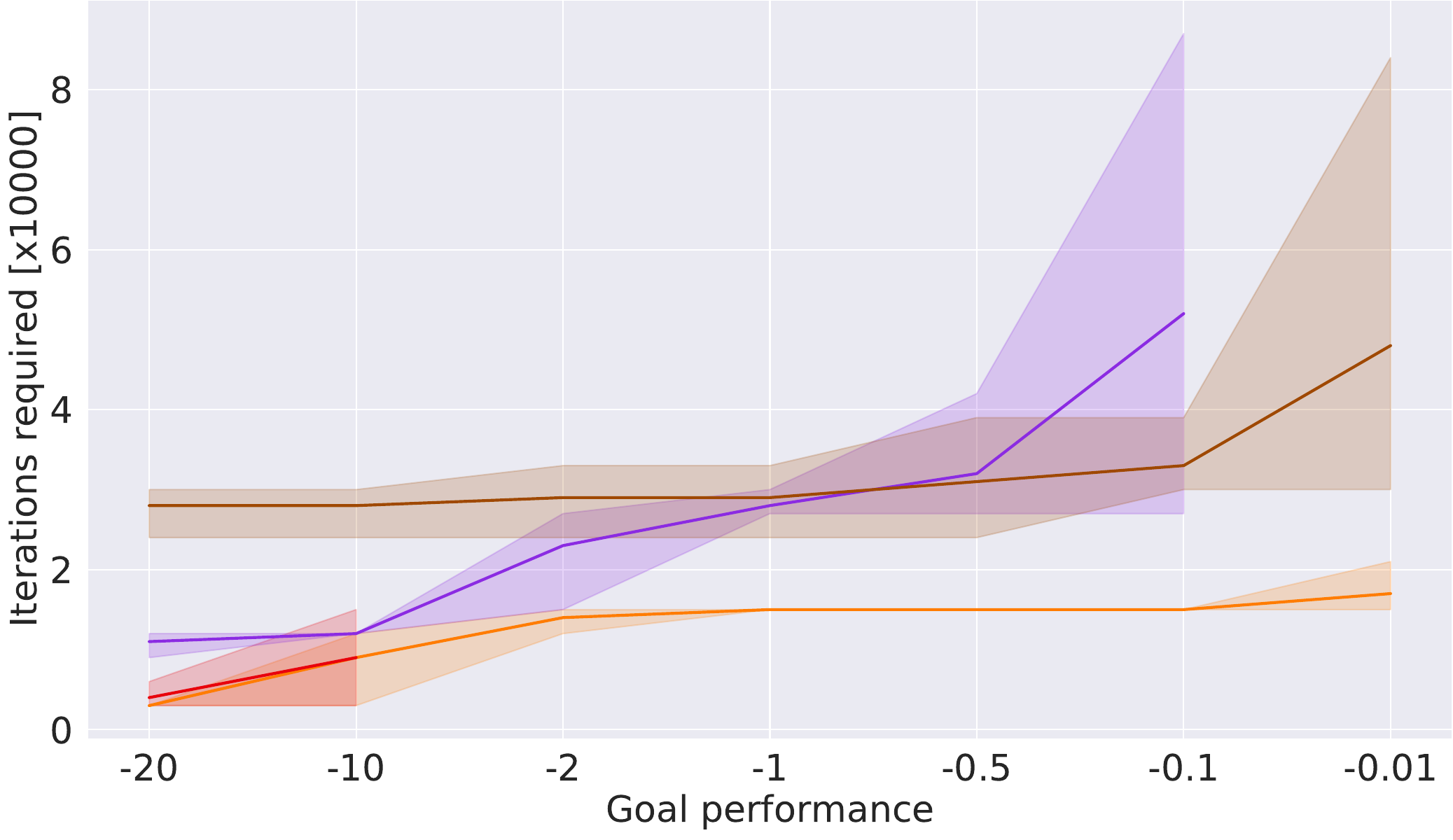}}
\caption{Algorithm comparison in terms of asymptotic performance and convergence speed on the inverted pendulum task. (a) Training curves. The dashed line shows the minimum requirement for the task to work, which is -2 in the task. (b) Convergence speed of different algorithms. The MPG-v1 and $n$-step DPG fail on this task because of the $n$-step TD value learning, so they are not plotted. The solid lines correspond to the mean and the shaded regions correspond to 95\% confidence interval over 5 runs.}
\label{fig.perf_inverted_pendulum}
\end{figure*}

\begin{table*}[h]%[tbhp]
\centering
\caption{Asymptotic performance and convergence speed over 5 runs of 100 thousand iterations. The best value for each task is bolded. $\pm$ corresponds to 95\% confidence interval over runs. The convergence speed is measured by the average iterations needed to reach a certain performance.}
\label{tab.perf}
\begin{threeparttable}[h]
\begin{tabular}{lllp{65pt}p{65pt}p{65pt}p{65pt}}
\toprule
Task & Algorithm    & Performance       & Convergence speed (-100, -20)\tnote{*} & Convergence speed (-30, -2)   & Convergence speed (-10, -0.1)  & Convergence speed (-5, -0.01)\\
\midrule
Path  &MPG-v1 (ours)     & \textbf{-3.73 $\pm$1.16}  & \textbf{3000 $\pm$ 0} & 9000 $\pm$ 4898          & \textbf{11000 $\pm$ 7483}  & \textbf{18000 $\pm$ 4898} \\
tracking  &MPG-v2 (ours) & -4.60 $\pm$1.40           &  3000 $\pm$ 0         & \textbf{6000 $\pm$ 4898} & 11000 $\pm$ 7483           & 31000 $\pm$ 27856  \\
          &$n$-step DPG  &-21.89 $\pm$47.87          & 30000 $\pm$ 12961     & 32000 $\pm$ 17204        & 41000 $\pm$ 14966          & -                          \\
          &$n$-step ADP  &-56.83 $\pm$7.44          & 17000 $\pm$ 2828      & -                        & -                          & -                        \\
          &TD3           & -13.30 $\pm$26.50         & 38000 $\pm$ 22090     & 49000 $\pm$ 31496        & 58000 $\pm$ 26981          & -                          \\
          &SAC           & -33.18 $\pm$19.97         & 17000 $\pm$ 15748     & 29000 $\pm$ 34756        & -                          & -                         \\
\midrule
Inverted          &MPG-v2 (ours)  & \textbf{-0.08 $\pm$0.16}  & \textbf{3000 $\pm$ 0} & \textbf{14000 $\pm$ 2828}  & \textbf{15000 $\pm$ 0}  & \textbf{17000 $\pm$ 5656}  \\
Pendulum$\dagger$ &$n$-step ADP   &-8.22 $\pm$7.64          & 4000 $\pm$ 2828       & -                          & -                       & -                        \\
                  &TD3            & -0.48 $\pm$1.35           & 11000 $\pm$ 22090     & 23000 $\pm$ 11313          & 52000 $\pm$ 50990       & -                          \\
                  &SAC            & -0.09 $\pm$0.12           & 28000 $\pm$ 5656      & 29000 $\pm$ 7483           & 33000 $\pm$ 8485        & 48000 $\pm$ 50911       \\

\bottomrule
\end{tabular}
\begin{tablenotes}
     \item[$\dagger$] MPG-v1 and $n$-step DPG fail on this task.
     \item[*] The goal performance for path tracking and inverted pendulum tasks, respectively.
     \item[-] The algorithm never reached the goal performance.
\end{tablenotes}
\end{threeparttable}
\end{table*}

\subsubsection{Time efficiency}
Figure \ref{fig.time_perf} compares the time efficiency of different algorithms on the path tracking task. Results show that the average wall-clock time to compute a gradient of MPG, together with $n$-step ADP, is much higher than $n$-step DPG, TD3 and SAC. This is because the time for BPTT grows linearly with the length of time chain, which is set to be 25 in our experiment. If we employ a synchronous learning architecture, the fast convergence advantage of MPG would be offset by the slow gradient computing time. Therefore, we develop an asynchronous learning architecture to address this problem, in which we use several learners to accelerate the gradient generation. As a result, we reduce the update time of MPG as well as $n$-step ADP to the level of data-driven RL methods, in which there is no need to wait for a gradient to be computed and all the time is used to update the value and policy networks.

\begin{figure*}[htbp]
\centering
\captionsetup[subfigure]{justification=centering}
\subfloat[Computing time of a gradient]{\label{fig.time_grad}\includegraphics[width=0.49\textwidth]{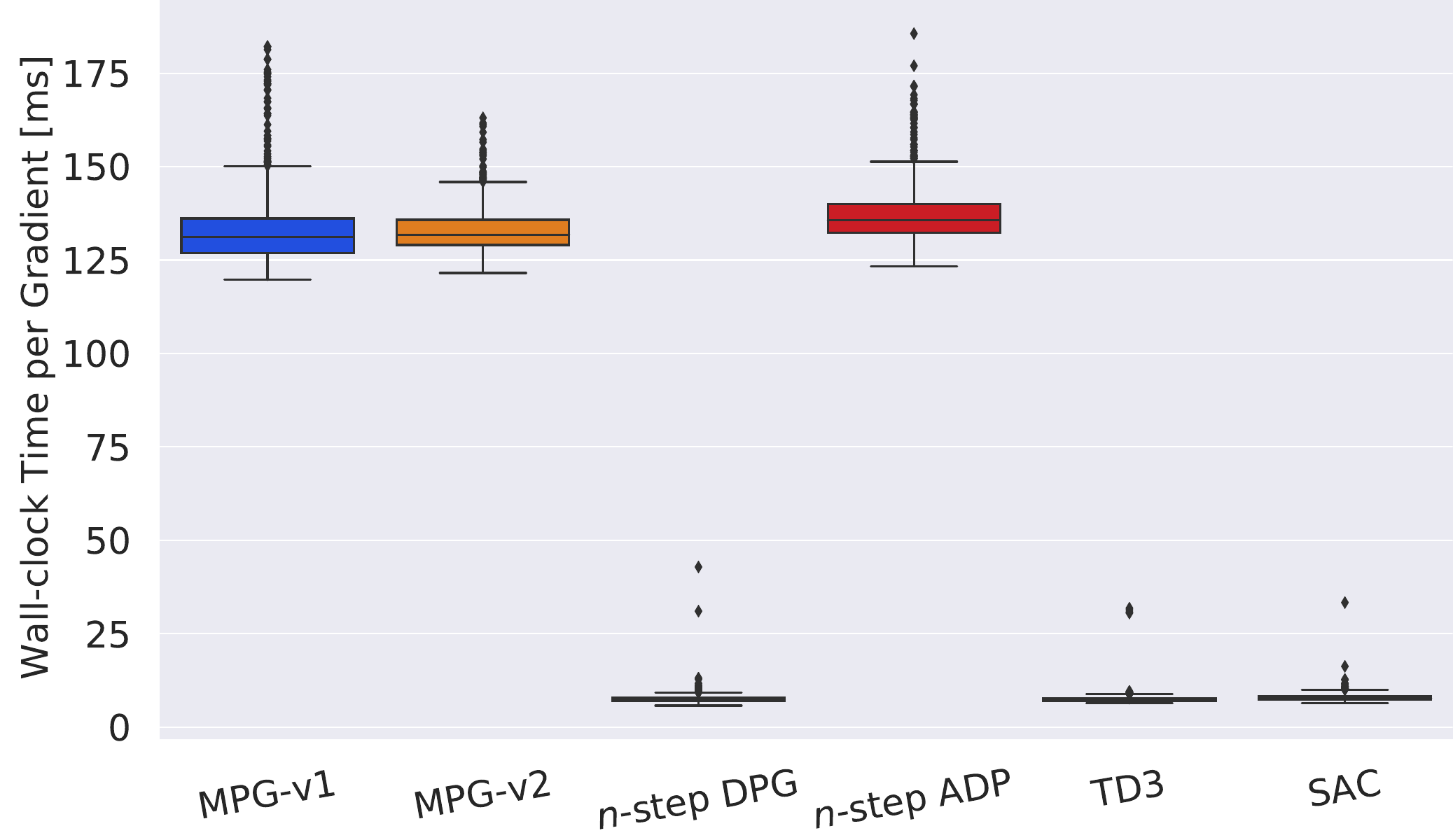}}\quad
\subfloat[Update time for each iteration]{\label{fig.time_update}\includegraphics[width=0.49\textwidth]{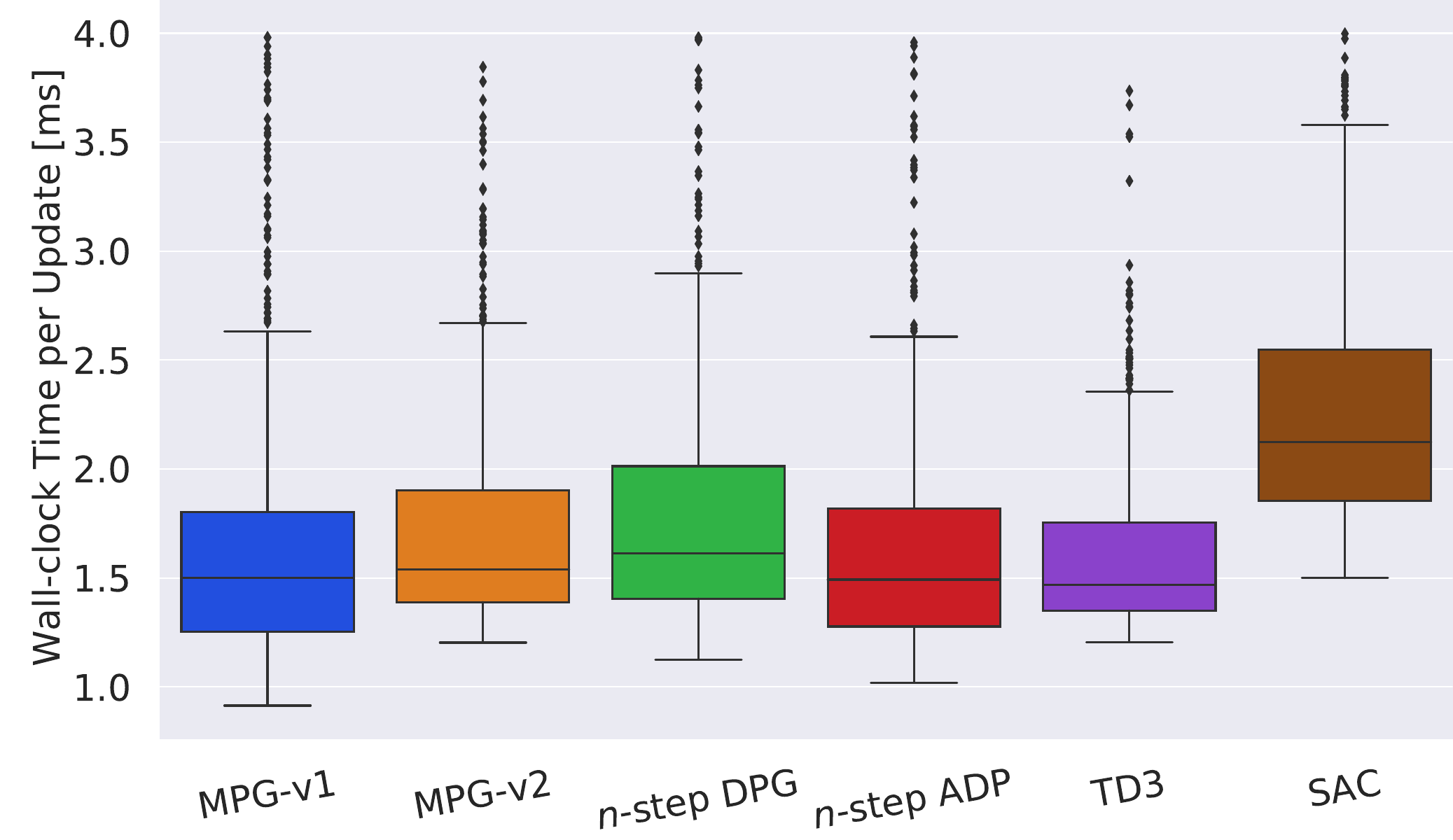}}
\caption{Algorithm comparison in terms of time efficiency on the path tracking task. (a) Computing time of a policy gradient. (b) Update time for each iteration. Each boxplot is drawn based on all evaluations over 5 runs. All evaluations were performed on a single computer with a 2.4 GHz 50 core Inter Xeon CPU.}
\label{fig.time_perf}
\end{figure*}
\section{Conclusion}
In this paper, we presented mixed policy gradient (MPG), an off-policy reinforcement learning algorithm, which uses both empirical data and a prior model so as to get benefits in terms of both asymptotic performance and convergence speed. The MPG is constructed as weighted average of the data-driven and model-driven PGs. The data-driven PG uses the learned action-value function to compute its derivative with respect to the action, whereas the model-driven PG uses the model rollout to computes the gradient by BPTT. Our theoretical results reveal the upper bound of the two PGs' bias, and their exponential relative magnitude during training. Relying on this, we design a rule-based weighting method, which can dynamically balance the bias of the data and model PGs. Besides, an asynchronous learning architecture is build to reduce each update wall-clock time to data-driven level. We evaluate our method on the path tracking task and inverted pendulum task. Results show that the proposed MPG algorithm outperforms all baseline algorithms with a large margin in terms of both asymptotic performance and convergence speed.

% \section*{Acknowledgment}
% The authors would like to thank...

% \newpage
\bibliographystyle{IEEEtran}
\bibliography{cite}

\appendices
\section{Proofs}
\subsection{Chain rule}\label{appendix.chain_rule}
\begin{equation*}
\begin{aligned}
    &\nabla_{a_t} \hat{r}_{t+k}\\
    =&\left\{
    \begin{array}{ll}
         \nabla_{a_t} \hat{s}_{t+1} \prod_{j=2}^{k}{\nabla_{\hat{s}_{t+j-1}} \hat{s}_{t+j}} \nabla_{\hat{s}_{t+k}} \hat{r}_{t+k} ,& k\ge2 \\
         \nabla_{a_t}\hat{s}_{t+1} \nabla_{\hat{s}_{t+1}} \hat{r}_{t+1},& k=1\\
         \nabla_{a_t} \hat{r}_t, &k=0\\
    \end{array} \right.\\ 
    &\nabla_{a_t} r_{t+k} \\
    =&\left\{
    \begin{array}{ll}
         \nabla_{a_t} s_{t+1} \prod_{j=2}^{k}{\nabla_{s_{t+j-1}} s_{t+j}} \nabla_{s_{t+k}} r_{t+k} ,& k\ge2 \\
         \nabla_{a_t}s_{t+1}\nabla_{s_{t+1}} r_{t+1},& k=1\\
         \nabla_{a_t} r_t, &k=0\\
    \end{array} \right.\\ 
    &\nabla_{a_t} \hat{Q}_{\omega, t+k} \\
    =&\left\{
    \begin{array}{ll}
         \nabla_{a_t} \hat{s}_{t+1} \prod_{j=2}^{k}{\nabla_{\hat{s}_{t+j-1}} \hat{s}_{t+j}} \nabla_{\hat{s}_{t+k}} \hat{Q}_{\omega, t+k},& k\ge2 \\
         \nabla_{a_t} \hat{s}_{t+1}\nabla_{\hat{s}_{t+1}} \hat{Q}_{\omega,t+1} ,& k=1\\
         \nabla_{a_{t}} \hat{Q}_{\omega,t}, &k=0\\
    \end{array} \right.\\ 
    &\nabla_{a_t} q^{\pi_{\theta}}_{t+k}\\ 
    =&\left\{
    \begin{array}{ll}
         \nabla_{a_t} s_{t+1}  \prod_{j=k}^{2}{\nabla_{s_{t+j-1}} s_{t+j}} \nabla_{s_{t+k}} q^{\pi_{\theta}}_{t+k} ,& k\ge2 \\
         \nabla_{a_t} s_{t+1} \nabla_{s_{t+1}} q^{\pi_{\theta}}_{t+1} ,& k=1\\
         \nabla_{a_t} q^{\pi_{\theta}}_{t}, &k=0\\
    \end{array} \right.\\ 
\end{aligned}
\end{equation*}

\subsection{Proof of the Theorem \ref{theorem.model_pg}}\label{appendix.proof_of_model_pg}
The relation between the data-driven PG and the unified PG is trivial. We will focus on the model-driven part, which requires two assumptions, i.e., $f=p$ and $\rho^{\pi_{\theta}}=d^{\pi_{\theta}}$. With the assumptions, any state that predicted by the model will obey stationary distribution, i.e., $\hat{s}_{t+l}\sim d^{\pi_{\theta}},\forall{l\ge0}$.
For simplicity, we denote
\begin{equation*}
B^j := \Ed{\hat{\tau}(\hat{s}_{t+j-1},\hat{a}_{t+j-1})}{\sum_{l=t+i-1}^{\infty} \gamma^{l-t} \hat{r}_l},
\end{equation*}
then
\begin{align}
    &\nabla_{\theta}J^{\text{Model}}(\theta)\nonumber\\
    =&\Ed{s_t}{\nabla_{\theta} \Ed{\hat{\tau}(s_t)}{\sum_{l=t}^{\infty} \gamma^{l-t} \hat{r}_l}}\nonumber\\
    =& \Ed{s_t}{\nabla_{\theta}\pi_{\theta}\nabla_{a_t} B_1 + \Ed{\hat{s}_{t+1}}{\nabla_{\theta}\pi_{\theta}\nabla_{\hat{a}_{t+1}} B_2} + \dots }\nonumber\\
    =&\sum_{k=0}^{\infty}\Ed{\hat{s}_{t+k}}{\nabla_{\theta}\pi_{\theta}\nabla_{\hat{a}_{t+k}} B_{k+1}}\nonumber\\
    =&\sum_{k=0}^{\infty}\gamma^k \Ed{s_t}{\nabla_{\theta}\pi_{\theta}\nabla_{a_t} B_1}\nonumber\\
    =&\sum_{k=0}^{\infty}\gamma^k \nabla_{\theta}J_{\infty}(\theta)\nonumber\\
    =&\frac{1}{1-\gamma}\nabla_{\theta}J_{\infty}(\theta)\nonumber
\end{align}
where the second equation holds from the chain rule. The third equation holds because each predictive state is independent and obeys stationary distribution. The fourth equation moves the whole trajectory from $\hat{s}_{t+k}$ $k$ steps back with a gain $\gamma^k$, which holds also from the stationary distribution.

% \eqref{eq.theorem1_proof_a} is hold because of properties of stationary state distribution and $f=p$. Equation \eqref{eq.theorem1_proof_b} is to substitute all $\hat{s}_{t+k}, k=0:n-1$ with $s_t$ because they all have the same distribution $\rho^{\pi_{\theta}}$.

% you can choose not to have a title for an appendix
% if you want by leaving the argument blank

\subsection{Regular Conditions}\label{appendix.regular_conditions}
\begin{regular_conditions}\label{conditions.continuity}
(Continuity). \\
$\nabla_s f(s, \pi_{\theta}(s))$ is $L_{\nabla f}$-Lipschitz,\\
$\nabla_{s} r(s, \pi_{\theta}(s))$ is $L_{\nabla r}$-Lipschitz,  \\
$q^{\pi_{\theta}}(s, \pi_{\theta}(s))$ is $L_q$-Lipschitz,\\
$Q_\omega(s, \pi_{\theta}(s))$ is $L_Q$-Lipschitz,\\
$q^{\pi_{\theta}, \epsilon}(s, \pi_{\theta}(s))$ is $L_{q^\epsilon}$-Lipschitz,\\
$\nabla_s q^{\pi_{\theta}}(s, \pi_{\theta}(s))$ is $L_{\nabla q}$-Lipschitz, \\
$\nabla_s Q_\omega(s, \pi_{\theta}(s))$ is $L_{\nabla Q}$-Lipschitz,\\
$\nabla_s q^{\pi_{\theta}, \epsilon}(s, \pi_{\theta}(s))$ is $L_{\nabla q^{\epsilon}}$-Lipschitz.
\end{regular_conditions}

\begin{regular_conditions}\label{conditions.boundedness}
(Boundedness).
\begin{flalign*}
&\sup_{s\in\mathcal{S}, a\in\mathcal{A}}\Vert \delta(s, a) \Vert = B_\delta,\\
&\sup_{s\in \mathcal{S}}\Vert \nabla_s \delta(s, \pi(s)) \Vert = B_{\nabla \delta s},\\
&\sup_{s\in\mathcal{S}, a\in\mathcal{A}}\Vert \nabla_a \delta(s, a) \Vert = B_{\nabla \delta a},\\
&\sup_{s \in \mathcal{S}}\Vert \nabla_s Q_\omega(s,\pi_{\theta}(s)) - \nabla_s q^{\pi_{\theta}, \epsilon}(s, \pi_{\theta}(s)) \Vert = B_{qs},\\
&\sup_{s \in \mathcal{S}, a \in \mathcal{A}}\Vert \nabla_a Q_\omega(s,a) - \nabla_a q^{\pi_{\theta}, \epsilon}(s, a) \Vert = B_{qa},\\
&\sup_{s \in \mathcal{S}}\Vert \nabla_{\theta} \pi_{\theta}(s) \Vert = B_{\nabla \theta},\\
&\sup_{j\ge 1}\sup_{s_t \in \mathcal{S}}\Vert \nabla_{s_t} f(\hat{s}_{t+j}, \pi_{\theta}(\hat{s}_{t+j})) \Vert = B_{\nabla fs},\\
&\sup_{s \in \mathcal{S}, a \in \mathcal{A}}\Vert \nabla_{a} f(s, a) \Vert = B_{\nabla fa},\\
&\sup_{j\ge 1}\sup_{s_t\in\mathcal{S}}\Vert \nabla_{s_t} p(s_{t+j}, \pi_{\theta}(s_{t+j})) \Vert=B_{\nabla ps},\\
&\sup_{s\in\mathcal{S}, a\in\mathcal{A}}\Vert \nabla_a p(s, a) \Vert=B_{\nabla pa},\\
&\sup_{s \in \mathcal{S}}\Vert \nabla_{s} r(s, \pi_{\theta}(s)) \Vert = B_{\nabla rs},\\
&\sup_{s \in \mathcal{S}, a \in \mathcal{A}}\Vert \nabla_{a} r(s, a) \Vert = B_{\nabla ra},\\
&\sup_{s \in \mathcal{S}}\Vert \nabla_s q^{\pi_{\theta}}(s,\pi_{\theta}(s)) \Vert = B_{\nabla qs},\\
&\sup_{s \in \mathcal{S}, a \in \mathcal{A}}\Vert \nabla_s q^{\pi_{\theta}}(s, a) \Vert = B_{\nabla qa},\\
&\sup_{s \in \mathcal{S}}\Vert \nabla_s Q_\omega(s,\pi_{\theta}(s)) \Vert = B_{\nabla Qs},\\
&\sup_{s \in \mathcal{S}, a \in \mathcal{A}}\Vert \nabla_a Q_\omega(s, a) \Vert = B_{\nabla Qa},\\
&\sup_{s \in \mathcal{S}}\Vert \nabla_s q^{\pi_{\theta}, \epsilon}(s,\pi_{\theta}(s)) \Vert = B_{\nabla q^{\epsilon}s},\\
&\sup_{s \in \mathcal{S}, a \in \mathcal{A}}\Vert \nabla_a q^{\pi_{\theta}, \epsilon}(s,a) \Vert = B_{\nabla q^{\epsilon}a}&&
\end{flalign*}
\end{regular_conditions}

\subsection{Proof of Lemma \ref{lemma.data_error}}\label{appendix.proof_of_lemma_data_error}
Inspired by \cite{luo2018algorithmic}, we first decompose the difference of two Q-value functions into a telescoping sum.

Denote the environment dynamics under noise as $o$. Let $W_j$ be the cumulative reward when we use true dynamics $p$ for $j$ steps and then $o$ for the rest of the steps, that is,
\begin{equation*}
    W_j = \Exp_{\substack{\forall t \ge 0, a_t=a\\ \forall 0\le t < j, s_{t+1} \sim p(\cdot|s_t,a_t)\\ \forall t \ge j, s_{t+1} \sim o(\cdot|s_t,a_t)}} \left\{ \sum_{t=0}^{\infty}\gamma^t r_t \bigg| s_0=s  \right\},
\end{equation*}
then we have $W_0 = q^{\pi_{\theta},\epsilon}(s, a)$ and $W_{\infty} = q^{\pi_{\theta}}(s, a)$,
\begin{equation*}
    q^{\pi_{\theta}}(s, a) - q^{\pi_{\theta},\epsilon}(s, a) = \sum_{j=0}^{\infty} (W_{j+1}-W_j)
\end{equation*}
where
\begin{equation*}
\begin{aligned}
W_j &= R + \Ed{s_j\sim \pi_{\theta}, p}{\Ed{s_{j+1} \sim o}{\gamma^{j+1}q^{\pi_{\theta}, \epsilon}_{s_{j+1}}}}\\
W_{j+1} &= R + \Ed{s_j\sim \pi_{\theta}, p} {\Ed{s_{j+1} \sim p}{\gamma^{j+1}q^{\pi_{\theta}, \epsilon}_{s_{j+1}} }},
\end{aligned}
\end{equation*}
in which $R$ is expected accumulative reward of the first $j$ steps from policy $\pi_{\theta}$ and true dynamics $p$, $q^{\pi_{\theta}, \epsilon}_{s}:=q^{\pi_{\theta}, \epsilon}(s, \pi_{\theta}(s))$. In our case,
\begin{equation*}
\begin{aligned}
    q^{\pi_{\theta}}(s, a) - q^{\pi_{\theta},\epsilon}(s, a) &= \sum_{j=0}^{\infty} (W_{j+1}-W_j)\\
    &=\sum_{j=0}^{\infty}\gamma^{j+1}(q^{\pi_{\theta}, \epsilon}_{s_{j+1}} - \mathbb{E}_{\epsilon}\left\{ q^{\pi_{\theta}, \epsilon}_{s_{j+1}+\epsilon} \right\}),
\end{aligned}
\end{equation*}
so the upper bound can be derived as follows,

\begin{align}
    &\sup_{s\in\mathcal{S}}\Vert \nabla_s q^{\pi_{\theta}}(s, \pi_{\theta}(s)) - \nabla_s q^{\pi_{\theta},\epsilon}(s, \pi_{\theta}(s))\Vert \nonumber\\
    \le&\sum_{j=0}^{\infty}\gamma^{j+1}\Ed{\epsilon}{\sup_{s\in\mathcal{S}}\Vert\nabla_s q^{\pi_{\theta}, \epsilon}_{s_{j+1}} - \nabla_s q^{\pi_{\theta}, \epsilon}_{s_{j+1}+\epsilon}\Vert}\nonumber\\
    \le&\sum_{j=0}^{\infty}\gamma^{j+1}\Ed{\epsilon}{ L_{\nabla q^\epsilon}\Vert\epsilon\Vert B_{\nabla ps}}\label{eq.data_error_bound}\\
    =&\frac{\gamma}{1-\gamma}L_{\nabla q^\epsilon}B_{\nabla ps}\Ed{\epsilon}{\lrVert{\epsilon}}=o(\Ed{\epsilon}{\lrVert{\epsilon}})\nonumber,
\end{align}
where it requires the continuity condition of $\nabla_s q^{\pi_{\theta},\epsilon}$ and the boundedness condition of the gradient through the state chain.
\begin{align}
    &\sup_{s\in\mathcal{S}, a\in\mathcal{A}}\Vert \nabla_a q^{\pi_{\theta}}(s, a) - \nabla_a q^{\pi_{\theta},\epsilon}(s, a)\Vert\nonumber \\
    \le&\sum_{j=0}^{\infty}\gamma^{j+1}\Ed{\epsilon}{\sup_{s\in\mathcal{S}, a\in\mathcal{A}}\Vert\nabla_a q^{\pi_{\theta}, \epsilon}_{s_{j+1}} - \nabla_a q^{\pi_{\theta}, \epsilon}_{s_{j+1}+\epsilon}\Vert}\nonumber\\
    \le&\sum_{j=0}^{\infty}\gamma^{j+1}\Ed{\epsilon }{ L_{\nabla q^\epsilon}\Vert\epsilon\Vert B_{\nabla ps} B_{\nabla pa}}\label{eq.data_error_bound2}\\
    =&\frac{\gamma}{1-\gamma}L_{\nabla q^\epsilon}B_{\nabla ps}B_{\nabla pa}\Ed{\epsilon}{\Vert\epsilon\Vert}=o(\Ed{\epsilon}{\Vert\epsilon\Vert})\nonumber,
\end{align}
where it requires the boundedness of $p$'s derivative w.r.t. $a$.

\subsection{Proof of Lemma \ref{lemma.r_q_grad_upper_bound}}
\label{appendix.proof_of_lemma_r_q_grad_bound}
We denote $\Delta_{s, t+i}:=\sup_{s_t \in \mathcal{S},a_t \in \mathcal{A}} \Vert \hat{s}_{t+i} - s_{t+i} \Vert$, and $\Delta_{\nabla s, t+i}:=\sup_{s_t \in \mathcal{S},a_t \in \mathcal{A}}\Vert \nabla_{\hat{s}_{t+i}} \hat{s}_{t+i+1} - \nabla_{s_{t+i}} s_{t+i+1} \Vert$. From the regular conditions from Appendix \ref{appendix.regular_conditions}, we have
\begin{equation}
\nonumber
\begin{aligned}
    &\Delta_{\nabla s, t+i}\\
    \le& \sup_{s_t \in \mathcal{S},a_t \in \mathcal{A}}\Vert \nabla_{\hat{s}_{t+i}} \hat{s}_{t+i+1} - \nabla_{s_{t+i}} \hat{s}_{t+i+1}+\\
    &\qquad\qquad\qquad\qquad\qquad\nabla_{s_{t+i}} \hat{s}_{t+i+1} - \nabla_{s_{t+i}} s_{t+i+1} \Vert\\
    \le& L_{\nabla f}\sup_{s_t \in \mathcal{S},a_t \in \mathcal{A}}\Vert \hat{s}_{t+i} - s_{t+i} \Vert + B_{\nabla \delta s}\\
    =& L_{\nabla f} \Delta_{s, t+i} + B_{\nabla \delta s}, i\ge 0.
\end{aligned}
\end{equation}
Then, we will discuss the upper bound in three cases: $k\ge 2$, $k=1$ and $k=0$.

First, when $k\ge2$, the reward term bias is:
\begin{strip}
\begin{align*}
&\sup_{s_t\in \mathcal{S}, a_t\in \mathcal{A}}\Vert \nabla_{a_t}\hat{r}_{t+k} - \nabla_{a_t}r_{t+k}\Vert\\
=&\sup_{s_t\in \mathcal{S}, a_t\in \mathcal{A}}\Vert \nabla_{a_t}\hat{r}_{t+k} -\nabla_{s_{t+k}} r_{t+k} 
\prod_{j=k}^{2}{\nabla_{\hat{s}_{t+j-1}} \hat{s}_{t+j}}  \nabla_{a_t} \hat{s}_{t+1} +
\nabla_{s_{t+k}} r_{t+k} \prod_{j=k}^{2}{\nabla_{\hat{s}_{t+j-1}} \hat{s}_{t+j}}  \nabla_{a_t} \hat{s}_{t+1} -
\nabla_{a_t}r_{t+k} \Vert\\
\le& L_{\nabla r} \sup_{s_t\in \mathcal{S}, a_t\in \mathcal{A}}\Vert \hat{s}_{t+k} - s_{t+k} \Vert B_{\nabla fs} B_{\nabla fa} +
\sup_{s_t\in \mathcal{S}, a_t\in \mathcal{A}} B_{\nabla rs} \Vert\prod_{j=k}^{2}{\nabla_{\hat{s}_{t+j-1}} \hat{s}_{t+j}}  \nabla_{a_t} \hat{s}_{t+1} - \prod_{j=k}^{2}{\nabla_{s_{t+j-1}} s_{t+j}}  \nabla_{a_t} s_{t+1} \Vert\\ 
=& L_{\nabla r} \Delta_{s, t+k} B_{\nabla fs} B_{\nabla fa} 
+\sup_{s_t\in \mathcal{S}, a_t\in \mathcal{A}} B_{\nabla rs}\Vert \prod_{j=k}^{2}{\nabla_{\hat{s}_{t+j-1}} \hat{s}_{t+j}}  \nabla_{a_t} \hat{s}_{t+1}-
\prod_{j=k}^{2}{\nabla_{\hat{s}_{t+j-1}} \hat{s}_{t+j}}  \nabla_{a_t} s_{t+1} +\\
&\ \ \ \ \ \ \ \ \ \ \ \ \ \ \ \ \ \ \ \ \ \ \ \ \ \ \ \ \ \ \ \ \ \ \ \ \ \ \ \ \ \ \ \ \ \ \ \ \ \ \ \ \ \ \ \ \ \ \ \ \ \ \ \ \ \ \ \ \ \ \ \ \prod_{j=k}^{2}{\nabla_{\hat{s}_{t+j-1}} \hat{s}_{t+j}}  \nabla_{a_t} s_{t+1}-
\prod_{j=k}^{2}{\nabla_{s_{t+j-1}} s_{t+j}}  \nabla_{a_t} s_{t+1} \Vert\\
\le& L_{\nabla r} \Delta_{s, t+k} B_{\nabla fs} B_{\nabla fa} + B_{\nabla rs}\bigg( B_{\nabla fs} B_{\nabla \delta a} +B_{\nabla pa} \Vert\prod_{j=k}^{2}{((\nabla_{\hat{s}_{t+j-1}} \hat{s}_{t+j}-\nabla_{s_{t+j-1}} s_{t+j})+\nabla_{s_{t+j-1}} s_{t+j})} 
-\prod_{j=k}^{2}{\nabla_{s_{t+j-1}} s_{t+j}} \Vert \bigg)\\
\le& L_{\nabla r} \Delta_{s, t+k} B_{\nabla fs} B_{\nabla fa} + B_{\nabla rs}\bigg( B_{\nabla fs} B_{\nabla \delta a} +B_{\nabla pa} \sum_{j=1}^{k-1}C_{k-1}^{j} \max_{1<i\le k} \Delta_{\nabla s, t+i-1}^{j} B_{\nabla ps}^{k-1-j} \bigg)\\
\le & L_{\nabla r} B_{\nabla fs} B_{\nabla fa} c_m k
+B_{\nabla rs} B_{\nabla fs} B_{\nabla \delta a} 
+B_{\nabla rs} B_{\nabla pa} \sum_{j=1}^{k-1}C_{k-1}^{j} B_{\nabla ps}^{k-1-j} (L_{\nabla f}c_m(k-1) + B_{\nabla \delta s})^j\\
\le & L_{\nabla r} B_{\nabla fs} B_{\nabla fa} c_m k
+B_{\nabla rs} B_{\nabla fs} B_{\nabla \delta a} 
+B_{\nabla rs} B_{\nabla pa} B_{\nabla ps}^{k-1} ((L_{\nabla f}c_m(k-1) + B_{\nabla \delta s})/B_{\nabla ps})^{(k-1)}\\
=& o((c_m k)^k)
\end{align*}
\end{strip}

\newpage

And the value term bias is:
\begin{align*}
&\sup_{s_t\in \mathcal{S}, a_t\in \mathcal{A}}\Vert \nabla_{a_t} \hat{Q}_{\omega,t+k}-\nabla_{a_t} q^{\pi_{\theta}}_{t+k}\Vert\\
\le &\sup_{s_t\in \mathcal{S}, a_t\in \mathcal{A}} \Vert  \nabla_{a_t} \hat{Q}_{\omega,t+k} -  \nabla_{a_t} Q_{\omega, t+k} +\\ &\qquad\qquad\qquad\qquad\qquad\qquad\nabla_{a_t} Q_{\omega, t+k} - \nabla_{a_t} q^{\pi_{\theta}}_{t+k} \Vert\\
\le & \sup_{s_t\in \mathcal{S}, a_t\in \mathcal{A}} \Vert  \nabla_{a_t} \hat{Q}_{\omega,t+k} -  \nabla_{a_t} Q_{\omega, t+k}\Vert+\\
&\qquad\sup_{s_t\in \mathcal{S}, a_t\in \mathcal{A}} \Vert \nabla_{s_{t+k}} Q_{\omega,t+k} - \nabla_{s_{t+k}} q^{\pi_{\theta}}_{t+k} \Vert B_{\nabla pa} B_{\nabla ps}\\
\le & o((c_m k)^k) + (\sup_{s_t\in \mathcal{S}, a_t\in \mathcal{A}} \Vert \nabla_{s_{t+k}} Q_{\omega,t+k} - \nabla_{s_{t+k}} q^{\pi_{\theta}, \epsilon}_{t+k} \Vert +\\
&\qquad\qquad\sup_{s_t\in \mathcal{S}, a_t\in \mathcal{A}} \Vert \nabla_{s_{t+k}} q^{\pi_{\theta}, \epsilon}_{t+k} - \nabla_{s_{t+k}} q^{\pi_{\theta}}_{t+k} \Vert) B_{\nabla pa} B_{\nabla ps}\\
\le & o((c_m k)^k)+B_{qs} B_{\nabla pa} B_{\nabla ps}+o(\Ed{\epsilon}{\Vert\epsilon\Vert}) B_{\nabla pa} B_{\nabla ps}\\
=& o((c_m k)^k+B_{qs} +\Ed{\epsilon}{\Vert\epsilon\Vert})
\end{align*}

Second, when $k=1$, the reward term bias is:
\begin{equation*}
\begin{aligned}
&\sup_{s_t\in \mathcal{S}, a_t\in \mathcal{A}}\Vert \nabla_{a_t}\hat{r}_{t+1} - \nabla_{a_t}r_{t+1}\Vert\\
=& \sup_{s_t\in \mathcal{S}, a_t\in \mathcal{A}} \Vert \nabla_{\hat{s}_{t+1}} \hat{r}_{t+1}\nabla_{a_t}\hat{s}_{t+1}
- \nabla_{s_{t+1}} r_{t+1}\nabla_{a_t}s_{t+1} \Vert\\
=& \sup_{s_t\in \mathcal{S}, a_t\in \mathcal{A}}
\Vert \nabla_{\hat{s}_{t+1}} \hat{r}_{t+1}\nabla_{a_t}\hat{s}_{t+1}
-\nabla_{s_{t+1}} r_{t+1}\nabla_{a_t}\hat{s}_{t+1}+\\
&\qquad\qquad\qquad\nabla_{s_{t+1}} r_{t+1}\nabla_{a_t}\hat{s}_{t+1}
- \nabla_{s_{t+1}} r_{t+1}\nabla_{a_t}s_{t+1} \Vert\\
\le& L_{\nabla r} \sup_{s_t\in \mathcal{S}, a_t\in \mathcal{A}}\Vert \hat{s}_{t+1} - s_{t+1} \Vert  B_{\nabla fa} + B_{\nabla rs}B_{\nabla \delta a}\\
\le&  L_{\nabla r}B_{\nabla fa}c_m + B_{\nabla rs}B_{\nabla \delta a}=o(c_m)
\end{aligned}
\end{equation*}

And the value term bias is:
\begin{equation*}
\begin{aligned}
&\sup_{s_t\in \mathcal{S}, a_t\in \mathcal{A}}\Vert \nabla_{a_t} \hat{Q}_{\omega,t+1}-\nabla_{a_t} q^{\pi_{\theta}}_{t+1}\Vert\\
\le &\sup_{s_t\in \mathcal{S}, a_t\in \mathcal{A}} \Vert  \nabla_{a_t} \hat{Q}_{\omega,t+1} -  \nabla_{a_t} Q_{\omega, t+1} +\\
&\qquad\qquad\qquad\qquad\qquad\nabla_{a_t} Q_{\omega, t+1} - \nabla_{a_t} q^{\pi_{\theta}}_{t+1} \Vert\\
\le&  L_{\nabla Q}B_{\nabla fa}c_m + B_{\nabla Qs}B_{\nabla \delta a} +\\
&\qquad\qquad\sup_{s_t\in \mathcal{S}, a_t\in \mathcal{A}} \Vert\nabla_{a_t} Q_{\omega, t+1} - \nabla_{a_t} q^{\pi_{\theta},\epsilon}_{t+1} \Vert
+\\
&\qquad\qquad\qquad\qquad\sup_{s_t\in \mathcal{S}, a_t\in \mathcal{A}} \Vert\nabla_{a_t} q^{\pi_{\theta},\epsilon}_{t+1} - \nabla_{a_t} q^{\pi_{\theta}}_{t+1} \Vert \\
\le&  L_{\nabla Q}B_{\nabla fa}c_m + B_{\nabla Qs}B_{\nabla \delta a} +B_{qs} B_{\nabla pa}+o(\Ed{\epsilon}{\Vert\epsilon\Vert})\\
=&o(c_m+B_{qs}+\Ed{\epsilon}{\Vert\epsilon\Vert})
\end{aligned}
\end{equation*}

Finally, when $k=0$, the reward term bias is:
\begin{equation*}
\begin{aligned}
\sup_{s_t\in \mathcal{S}, a_t\in \mathcal{A}}\Vert \nabla_{a_t}\hat{r}_t - \nabla_{a_t}r_t\Vert=0
\end{aligned}
\end{equation*}

And the value term bias is:
\begin{equation*}
\begin{aligned}
&\sup_{s_t\in \mathcal{S}, a_t\in \mathcal{A}}\Vert \nabla_{a_t} \hat{Q}_{\omega,t}-\nabla_{a_t} q^{\pi_{\theta}}_{t}\Vert\\
\le& \sup_{s_t\in \mathcal{S}, a_t\in \mathcal{A}}\Vert \nabla_{a_t} \hat{Q}_{\omega,t}-\nabla_{a_t} q^{\pi_{\theta},o}_{t}\Vert
+\\
&\qquad\qquad\qquad\qquad\sup_{s_t\in \mathcal{S}, a_t\in \mathcal{A}}\Vert \nabla_{a_t}q^{\pi_{\theta},\epsilon}_{t}-\nabla_{a_t} q^{\pi_{\theta}}_{t}\Vert\\
\le& o(B_{qa}+\Ed{\epsilon}{\Vert\epsilon\Vert}
\end{aligned}
\end{equation*}

\subsection{Proof of Theorem \ref{theorem.bias_upper_bound}}\label{appendix.proof_of_bias_upper_bound}
For simplicity, we denote
\begin{equation*}
\begin{aligned}
D_n(s_t, a_t)=\sum_{l=t}^{n-1+t} \gamma^{l-t} (\hat{r}_l- r_l)+ \gamma^{n}( \hat{Q}_{w,t+n}-q^{\pi_{\theta}}_{t+n}),
\end{aligned}
\end{equation*}
then
\begin{equation*}
\begin{aligned}
&\sup_{s_t\in\mathcal{S}}\Vert\nabla_{a_t}D_n(s_t, a_t)|_{a_t=\pi_{\theta}(s_t)}\Vert\\
=&\Vert\sum_{l=t}^{n-1+t} \gamma^{l-t} (\nabla_{a_t}\hat{r}_l- \nabla_{a_t}r_l)+\gamma^{n}( \nabla_{a_t}\hat{Q}_{\omega,t+n}-\nabla_{a_t}q^{\pi_{\theta}}_{t+n})\Vert\\
\le&\sum_{l=t}^{n-1+t} \gamma^{l-t} \sup\Vert\nabla_{a_t}\hat{r}_l- \nabla_{a_t}r_l\Vert+\\ &\qquad\qquad\qquad\gamma^{n}\sup\Vert\nabla_{a_t}\hat{Q}_{\omega,t+n}-\nabla_{a_t}q^{\pi_{\theta}}_{t+n}\Vert\\
\le&\left\{ 
    \begin{array}{lc}
        o(n(\gamma c_m n)^n + \gamma^n (B_{qs}+\Ed{\epsilon}{\Vert\epsilon\Vert}), & n \ge 1\\
        o(B_{qa}+\Ed{\epsilon}{\Vert\epsilon\Vert}), & n=0\\
    \end{array}\right. \\
\end{aligned}
\end{equation*}
Finally,
\begin{equation}
\nonumber
\begin{aligned}
&\Vert \nabla_{\theta}J_n(\theta) - \nabla_{\theta}J(\theta) \Vert\\
\le &\Ed{s_t}{\bigg\Vert \nabla_{a_t}D_n(s_t, a_t)\bigg|_{a_t=\pi_{\theta}(s_t)}\nabla_{\theta}\pi_{\theta}(s_t)\bigg\Vert}\\
\le &\Ed{s_t}{\bigg\Vert \nabla_{a_t}D_n(s_t, a_t)\bigg|_{a_t=\pi_{\theta}(s_t)}\bigg\Vert\bigg\Vert\nabla_{\theta}\pi_{\theta}(s_t)\bigg\Vert}\\
\le& B_{\nabla \theta}\sup_{s_t\in\mathcal{S}}\Vert\nabla_{a_t}D_n(s_t, a_t)|_{a_t=\pi_{\theta}(s_t)}\Vert\\
\le&\left\{ 
    \begin{array}{lc}
        o(n(\gamma c_m n)^n + \gamma^n (B_{qs}+\Ed{\epsilon}{\Vert\epsilon\Vert}), & n \ge 1\\
        o(B_{qa}+\Ed{\epsilon}{\Vert\epsilon\Vert}), & n=0\\
    \end{array}\right. \\
\end{aligned}
\end{equation}

\end{document}